\documentclass[11pt]{article}
\usepackage[left=1in,top=1in,right=1in,bottom=1in,head=0in,nofoot]{geometry}

\usepackage{setspace,url,bm,amsmath} 
\usepackage{scalerel}

\usepackage{xcolor}
 \usepackage{multirow}
 \usepackage{arydshln}
 \usepackage{mathtools}
\usepackage{titlesec} 
\titlelabel{\thetitle.\quad} 
\titleformat*{\section}{\bf\Large\center}

\usepackage{authblk}
\usepackage{float}
\usepackage{graphicx} 
\usepackage{bbm}
\usepackage{latexsym}
\usepackage{caption}
\usepackage[margin=20pt]{subcaption}
\usepackage{enumerate}
\graphicspath{ {./Graphs/} }

\usepackage{amsthm}
\usepackage{amssymb}
\usepackage{amsmath}
\usepackage{color}

\usepackage{comment}
\theoremstyle{definition}
\newtheorem{assumption}{Assumption}
\newtheorem*{theorem*}{Theorem}
\newtheorem{theorem}{Theorem}
\newtheorem*{rmk*}{remark}
\newtheorem{proposition}{Proposition}
\newtheorem{lemma}{Lemma}
\newtheorem{example}{Example}

\newtheorem{definition}{Definition}
\newtheorem{remark}{Remark}
\newtheorem{corollary}{Corollary}
\newtheorem*{corollary*}{Corollary}

\usepackage{natbib} 
\bibpunct{(}{)}{;}{a}{}{,} 

\usepackage{etoolbox} 
\apptocmd{\sloppy}{\hbadness 10000\relax}{}{} 

\usepackage{multibib}
\newcites{sec}{References}

\usepackage{color}
\usepackage{listings}
\usepackage{hyperref}
\usepackage{booktabs}

\usepackage{lscape}

\RequirePackage[normalem]{ulem}

\def \E  {\mathbb{E}}
 
\def \P{\mathbb{P}}

\newcommand{\HR}{\textup{HR}}
\newcommand{\HQ}{\textup{HQ}}
\def \I {\mathbb{I}}
\newcommand{\D}{\mathcal{D}}
\newcommand{\Q}{\mathcal{Q}}
\newcommand{\T}{\mathcal{T}} 
\newcommand{\R}{{\mathbb{R}}}
\def \var {\text{Var}}
\newcommand{\col}[1]{{\color{magenta}  #1}}

\newcommand{\indep}{\perp \!\!\! \perp}

\usepackage{algpseudocode}
\usepackage{algorithm}

\allowdisplaybreaks

\begin{document}

\singlespacing

\title{\bf  
Counterfactually Safe Reinforcement Learning} 

\author[1]{Jingyi Li}
\author[2]{Peng Wu\thanks{Corresponding author: pengwu@btbu.edu.cn.}}
\author[3]{Chengchun Shi}
\affil[1]{\small Department of Statistics and Data Science, National University of Singapore}
\affil[2]{\small School of Mathematics and Statistics, Beijing Technology and Business University}  
\affil[3]{\small Department of Statistics, London School of Economics and Political Science}  



\date{}

\maketitle

\begin{abstract}
Reinforcement learning algorithms are generally designed to maximize the expected return across a population. However, a policy that is optimal on average may be suboptimal for certain individuals, leading to potential safety concerns. To address this, we first formalize the notion of individual harm from a counterfactual perspective and define harm as the event in which a chosen action results in a strictly worse outcome than a baseline alternative. We then propose a general two-stage procedure for learning policies that maximize the expected return while accounting for individual harm. We further establish the finite-sample properties of the learned policy, derive an upper bound on its sub-optimality gap, and show that the harm rate remains well-controlled. Numerical experiments on both simulated and real-world datasets demonstrate the effectiveness of the proposed approach.   
\end{abstract}

\medskip 
\noindent 
{\bf Keywords}: 
Causal Inference; Reinforcement Learning; Sequential Decision Making.


\newpage

\onehalfspacing

\section{Introduction}
\label{sec:intro}

Reinforcement Learning (RL) has recently become an active area of research in artificial intelligence and computer science~\citep{sutton1998reinforcement, li2017deep}. 
Characterized by its versatile framework, RL has found widespread application across a variety of fields and disciplinary backgrounds, ranging from sophisticated robotics and complex control systems to strategic planning and finance~\citep{kober2013reinforcement, silver2016mastering, liu2021finrl}.  
Its significance of RL has been further heightened by the emergence of large language models~\citep{ouyang2022training}, where techniques such as Reinforcement Learning from Human Feedback (RLHF) are essential for aligning model's outputs with human values.

The primary objective of RL is to find a policy $\pi(\cdot)$, as a function that maps a given state $x$ to an action $a$, that maximizes the expected cumulative outcome within a population, assuming that larger outcomes are preferable. However, this emphasis on the expectation fails to account for individual-level heterogeneity and may result in worse outcomes for certain individuals compared with the status quo~\citep{Lei-Candes2021, MuellerPearl2023-CausalInference, Jin-etal2023, wu2024quantifying}, 
 potentially raising safety concerns, as discussed in the motivating example below.

\subsection{Motivation}  \label{sec-motivation}
We present a synthetic example to illustrate the motivation of this paper. Consider four patients, each facing a choice between two actions (surgery or medication) at 2 time steps. At $t=0$, all patients are in state $x=1$, and at $t=1$, they transition to state $x=2$. Figure  \ref{fig:intro} displays the numerical outcomes for each patient under each potential action per time step. We say the surgery (treatment) is harmful for a patient if it leads to a worse outcome than the medication (control)~\citep{Kallus2022, wu2024quantifying}. 

\begin{figure}[H]
    \centering
    \includegraphics[width=0.8\textwidth]{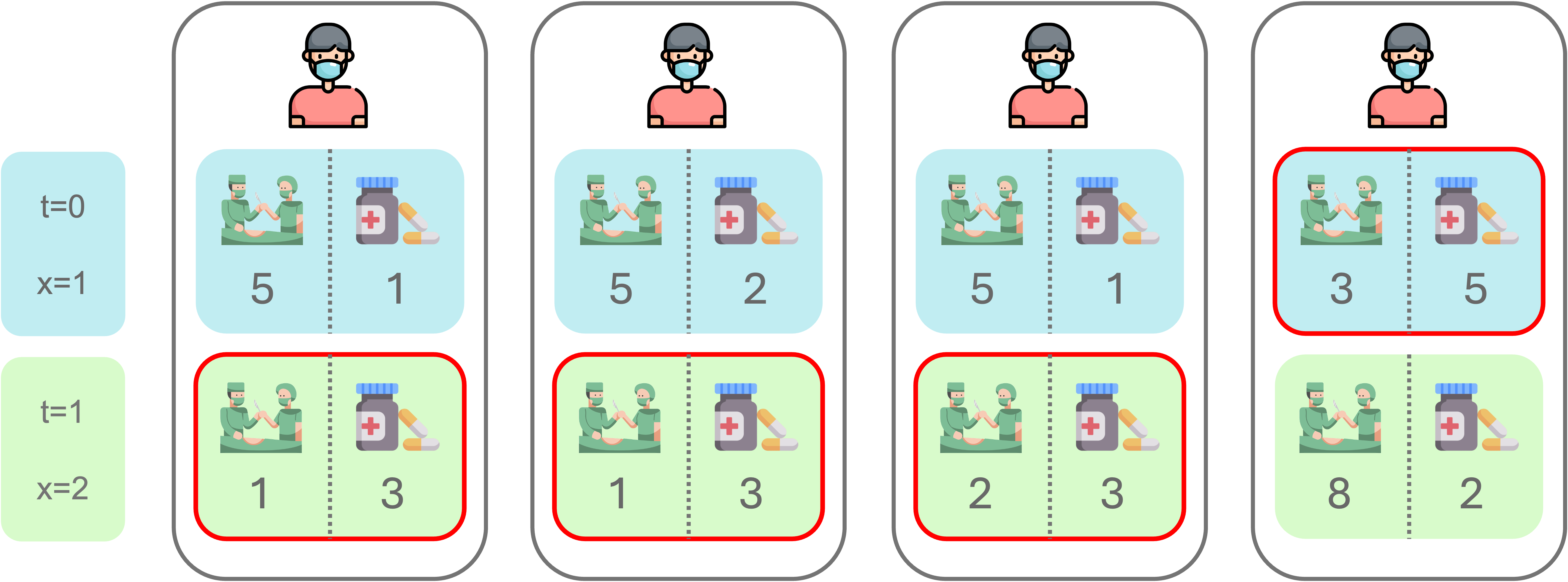}
    \caption{Illustration of potential outcomes of four patients with two actions (\textbf{surgery} or \textbf{medication}) under two time steps. Red borders highlight the cases  in which the aggressive treatment (surgery) is harmful.}
    \label{fig:intro}
\end{figure}

From Figure~\ref{fig:intro}, the optimal policy selects the action with the highest sum of outcomes at each time step, which is to apply surgery at both time steps, yielding a maximum total outcome of $18 + 12 = 30$.
This optimality, however, comes at the cost of an average harm rate of $1/2$. The harm occurs because the action of surgery results in a strictly worse outcome for some individuals (highlighted in red border) compared with the  medication. In contrast, if we adjust the policy to assign surgery only when $x=1$ (i.e., $a = \text{surgery}$ if $x=1$, and $a = \text{medication}$ if $x = 2$), the accumulated outcome slightly decreases to $18+11 = 29$. However, the average harm rate drops to only $1/4$, greatly decreasing the harm rate and improving overall safety.

The example reveals that a policy that maximizes cumulative outcomes may be suboptimal or even harmful for a substantial subset of individuals. 
This motivates our research question: how to learn a policy that balances the objective of maximizing cumulative outcomes with the constraint of minimizing individual harm, thereby mitigating safety concerns.  
This problem is particularly critical in high-stakes application scenarios. For instance, in clinical medicine, the Hippocratic Oath guides practitioners to uphold the principle of ``do no harm''~\citep{Jonsen1978, morath2005no, Wiens-etal2019}. 

This conflict between aggregate outcome and individual safety is not merely theoretical \citep{gottesman2019guidelines}. It represents a critical challenge in precision medicine. Our motivating example also arises from HIV treatment optimization, where the primary goal of maximizing overall viral suppression must be carefully weighed against the risks of suboptimal treatment responses for individual patients, such as ineffective viral control or failure to restore immune function \citep{shortreed2011informing}. We investigate this dilemma using the HIV dataset that captures the complex trade-off between maximizing average therapeutic efficacy and preventing individual treatment failures. 

\subsection{Contribution}
This article makes the following contributions. 

We first demonstrate that maximizing expected outcomes 
does not guarantee individual safety. 
To address this issue, we formalize the problem of individual harm from a counterfactual perspective. Without loss of generality, for any ordered actions $a \succ a'$, 
we say an individual suffers harm when the outcome under action $a$ is strictly worse than the potential outcome under action $a'$. We introduce two specific metrics, Harm Rate, which measures the frequency of harm, and Harm Value, which measures the magnitude of outcome loss. 
The order of actions is specified a priori according to the context of the problem, such as the severity of an intervention or the cost associated with an action. This setting is of particular interest because it naturally raises concerns about harm: higher-order actions do not necessarily produce better outcomes for all individuals and, in some cases, may even result in harm to many individuals, as revealed in Section \ref{sec-motivation}.  

To learn a policy that accounts for harm and ensures safety, we develop a two-stage procedure to learn a safety-aware policy by incorporating harm information into policy optimization.  
In the first stage, we estimate counterfactual harm for each data point and adjust the original outcomes by a harm penalty. The adjusted outcomes represent the actual utility after accounting for the risk of individual safety.
In the second stage, we apply the Fitted Q-Iteration (FQI) algorithm on this modified dataset to learn a safety-aware policy. 
Then, we establish finite-sample theoretical guarantees for the performance of the proposed algorithm.   
We show that the sub-optimality gap of the learned policy is upper-bounded by three components: the estimation error of the counterfactual harm (Stage 1), the estimation error arising from FQI (Stage 2), and an initialization bias. We also upper bound the harm rate, demonstrating that it remains well-controlled under a margin condition.

We demonstrate that our method effectively reduces harm while maintaining relatively high outcomes in simulations.  
We further apply our procedure to a real-world HIV dataset, demonstrating its potential in addressing the trade-off between efficacy and toxicity.
These numerical results indicate that our approach can significantly enhance safety in practice.

Finally, we remark that the proposed algorithm is general and can be seamlessly integrated into existing RL algorithms. Researchers and practitioners can apply our harm-aware adjustment to any learned policy to enhance safety without modifying the underlying learning algorithm. This versatility supports broad applicability in safety-critical domains such as healthcare, finance, and autonomous systems, where individual-level reliability and safety are paramount.

\subsection{Related Work} 
In the statistics literature, RL is closely related to a huge literature on learning dynamic treatment regimes that maximize the expected cumulative outcome of patients within a population~\citep[see e.g.,][and the references therein]{chakraborty2013statistical}. 
These works are later extended to  
Markov Decision Processes (MDP) settings~\citep{ertefaie2018constructing,luckett2020estimating, liao2022batch, shi2024statistically, li2024settling, zhou2024estimating}. 
These algorithms form the foundation for many applications in healthcare, economics, and operations research, where the primary objective is to improve average performance across individuals or environments.

Beyond outcome maximization, another line of works considers additional factors such as fairness and risk sensitivity. Fairness-aware policy learning seeks to mitigate disparate treatment across demographic or contextual groups \citep{viviano2024fair, liu2025fairness}. 
\citet{kusner2017counterfactual} 
introduce the concept of counterfactual fairness, ensuring that predictions remain invariant to sensitive attributes in causal models. Later studies extend this principle to RL, incorporating fairness constraints in reward or transition dynamics~\citep{joseph2016fairness, zhang2018mitigating, chen2024learning, wang2025counterfactually}. 
Another important direction is risk-sensitive and constrained RL, where the goal is to control not only the mean return but also higher-order moments or tail risks. Variance-constrained, quantile- and CVaR-based  (Conditional Value-at-Risk)~algorithms
explicitly trade off expected reward and variability, yielding safer and more predictable policies \citep{chow2018lyapunov, wang2018quantile, tang2020worst, mo2021learning, yang2023riskaverse, fang2023fairness, vaskov2024no, zhong2025risk}. 
Distributionally robust RL further addresses uncertainty in transition or reward models by optimizing worst-case performance under plausible model perturbations~\citep{pinto2017robust, derman2018soft, wang2020robust}. 
These approaches ensure that policies respect safety or resource constraints while pursuing near-optimal rewards. Collectively, these studies move beyond the pure reward-maximization paradigm by emphasizing fairness, robustness, or stability.

Our work falls into the second line of research. We extend the existing frameworks by considering the individual-level harm in the policy learning. Our approach aligns with the recent movement toward responsible and interpretable policy learning~\citep{thomas2019preventing, jiang2022constrained, shi2023risk}, and offers a principled way to trade off performance and safety in real-world decision-making systems.

\emph{Paper Organization.} 
In Section \ref{sec:prob}, we present the problem setup, define harm under the counterfactual framework, provide motivating examples, and introduce the identifiability assumptions. Section \ref{sec:policy} outlines our two-stage algorithm and establishes its finite-sample theoretical guarantees, including performance gap bounds and harm rate control. 
Section \ref{sec:num} presents synthetic experiments and a real-world application, demonstrating the effectiveness and safety awareness of the proposed method. Section \ref{sec-conclusion} concludes with a discussion.


\section{Statistical Framework}\label{sec:prob}

We consider a sequential decision making problem, formulated as an MDP $\mathcal{M} = \{\mathcal{X}, \mathcal{A}, \mathcal{P}, r, \gamma\}$. 
We assume $N$ independent individuals, each following the same decision process. For individual $i$, at each time point $t$, the current state is represented by covariates $X_{i,t} \in \mathcal{X}$. An action $A_{i,t} \in \mathcal{A}$ is then selected, after which an immediate random scalar outcome $Y_{i,t}$ is observed. The state subsequently changes to $X_{i,t+1}$ according to a transition function $\mathcal{P}(X_{i,t+1} \mid X_{i,t}, A_{i,t})$. The observed data consists of $N$ sample trajectories, 
$\{ (X_{i,t}, A_{i,t}, Y_{i,t}): t = 0, ..., T_i, i = 1, ..., N \}$, where $T_i$ is the length of horizon for individual $i$. For simplicity, we let $T_i = T$ for all $i$. Without loss of generality, we assume
that a larger value of $Y_i$ is desirable, and 
we drop the subscript $i$ for a generic individual. For clarity, we summarize the above data assumptions below. 
\begin{assumption}[Data Generating Process]  \label{assump1} Suppose that 
\par 
(a) (Markovianity) For any $t$, $X_{t} \indep \{(X_s, A_s, Y_s): 0 \leq s \leq  t -1\} \mid (X_{t-1}, A_{t-1})$. 

(b) (Conditional Mean Independence) $\E[ Y_t \mid X_t =x, A_t = a, \{(X_s, A_s, Y_s): 0 \leq s < t\} ] =\E[ Y_t \mid X_t =x, A_t = a]=r(x,a)$.

(c) (Stationary) For any $t$, $\Pr(X_{t+1} = x'\mid A_t = a, X_t = x) = \Pr(X_1 = x' \mid X_0 = x, A_0 = a)$. 
\end{assumption}

The above conditions are standard and widely imposed in RL literature~\citep{sutton1998reinforcement,kallus2022efficiently,shi2024value}. Assumptions~\ref{assump1}(a) and \ref{assump1}(b) imply that the future state and the conditional mean of the immediate outcome are independent of the past observations given the current state-action pair. Assumption~\ref{assump1}(c) further requires that the state transition and outcome regression function are invariant over time. 

At each time step $t$, we define the history 
as $H_t = (X_0, A_0, Y_0, \dots, X_{t-1}, A_{t-1}, Y_{t-1}, X_t),$ which includes all past covariates, actions, outcomes, and current state up to that point.  
A policy is a sequence of decision rules $\pi = \{\pi_t\}_{t\ge 0}$, where each $\pi_t$ maps the space of $H_t$ to a probability mass function on the action space, which is the shorthand of $\pi_t(\cdot \mid H_t)$. 
Under a given policy $\pi$, at time $t$, an agent chooses action $A_{t}$ at state $X_t$ with probability $\pi_t(A_t\mid H_t)$ and receives an immediate outcome $Y_t$. The discounted cumulative outcome is defined as $\sum_{t=0}^{+\infty} \gamma^t Y_{t}$, where the discount factor $\gamma \in [0,1)$ represents the trade-off between the immediate and future outcomes.

Under Assumption~\ref{assump1}, there exists an optimal stationary policy whose action selection depends on the history only through the current state, with this dependence being time-invariant, that its expected cumulative outcome is no smaller than that of any history-dependent policy \citep{puterman2014markov}.

\subsection{Counterfactual Harm}


Similar to~\citet{Kallus2022} and~\citet{wu2024quantifying}, we define the harm using the potential outcome framework. 
For each time step $t$ and action $a \in \mathcal{A}$, let $Y_t(a)$ denote the potential outcome that would be observed if action $a$ were taken at time $t$. By consistency~\citep{Hernan-Robins2020},  the observed outcome $Y_t$ is equal to $Y_t(A_t)$.  


\begin{definition}[Action-Relative Harm] \label{def-harm-rate}
For any two actions $a,a'$ such that action $a$ performs better than $a'$ on average, an individual suffers harm at time step $t$ if the outcome under $a$ is lower than under $a'$, i.e., $Y_t(a) - Y_t(a')<0$, with the value of such harm being $Y_t(a') - Y_t(a)$.
Correspondingly, given the current state $X_t = x$ (or for the subpopulation of $X_t = x$), the \textbf{harm rate} 
is defined as 
\begin{equation*} \label{eq:HR}
    \HR(x;a,a')=\Pr\left(Y_t(a) - Y_t(a') <0 \mid X_t=x \right),
\end{equation*}
where the probability is taken with respect to the population of individuals. 
Similarly, the \textbf{harm value} at time step $t$ and state $X_t = x$ is defined as 
\begin{equation*}
    \HQ(x; a,a') 
    =  \E \Big[ (Y_t(a') - Y_t(a)) \I(Y_t(a) - Y_t(a')<0)\mid X_t=x \Big].
\end{equation*} 
\end{definition}

\begin{remark}
Different from~\citet{Kallus2022} and~\citet{wu2024quantifying}, we generalize the concept of harm by extending the binary treatment setting to an ordered multi-valued treatment framework. We also incorporate the value of harm into consideration, providing a more comprehensive characterization of individual-level effect.


    
\end{remark}


In the subpopulation where $X_t = x$, the harm rate $\HR(x; a, a')$  quantifies the proportion of individuals harmed by choosing action $a$ relative to $a'$, and the harm value $\HQ(x;a,a')$ captures the expected loss in outcome at time $t$ resulting from selecting action $a$ rather than $a'$. 
Notably, even if the conditional mean outcome under action $a$ is larger than that under action $a'$, a high variance in the difference $Y_t(a) - Y_t(a')$ can still result in a large harm rate (or value). 

Section \ref{sec-motivation} illustrates that accounting for harm is crucial to avoid overly aggressive policies and enhancing decision safety. Hereafter, we focus on harm rate, and the associated results for harm value are relegated to the Supplementary Material. 


\subsection{Problem Definition}  \label{sec2-2}

In this article, our goal is to learn a safe policy that achieves high expected outcomes with a low harm rate.  
To achieve this, we define the utility at time $t$ under action $a$ as 
\begin{equation} \label{eq:R-hr}
    R_t(a) = Y_t(a) - \beta \cdot \HR(X_t; a, a')
\end{equation} 
where \( \beta > 0 \) is a penalty parameter that controls the trade-off between outcome and harm, and $a'$ is the reference action. We may interpret $\beta$ as the risk-aversion factor, and the utility is the outcome minus the risk-aversion factor multiplied by the harm rate. 
The reference action could be pre-specified or determined by a baseline policy $\pi^0$ (see Section \ref{sec3-3} for details).
The formulation in \eqref{eq:R-hr} reflects a core principle in safe decision making: an action with a high expected outcome may be undesirable if it carries substantial risk of harm. By incorporating the harm rate into the outcome function, we encourage the policy to favor not only effectiveness but also safety.

We define return as the discounted cumulative utility, denoted as $\sum_{t=0}^{\infty} \gamma^t R_{t}$. Under policy $\pi$, we define the value function $V^\pi(x)$ to be the expected return, starting from state $x$ as $V^\pi(x) = \E^\pi\Big[\sum_{t=0}^{\infty} \gamma^t R_{t} \mid X_0 = x\Big]$, and action-value function $Q^\pi(x,a)$ to be the expected return starting from state $x$ and action $a$ as $Q^\pi(x,a) = \E^\pi\Big[\sum_{t=0}^{\infty} \gamma^t R_{t} \mid X_0 = x, A_0 = a\Big]$, where $\E^\pi$ denotes the expectation assuming that the system follows policy $\pi$.
Let $J(\pi) = \E [ V^\pi(X_0)]$ denote the expectation aggregated over the initial state. Our goal is to find the target policy $\pi^*$ that selects the action that maximizes $J(\pi)$. Let 
\begin{equation}  \label{eq7}
    J^* = J(\pi^*) = 
    \max_{\pi} J(\pi).
\end{equation}
Quantifying harm is crucial in many real-world applications. We present two illustrative examples to highlight  
the practical significance of incorporating counterfactual harm into decision-making.

\begin{example}[Healthcare]  
Medical treatment for cancer may include surgery, chemotherapy, and conservative medication, listed in order of decreasing clinical aggressiveness. For instance, early-stage patients may experience severe side effects or complications from surgery that diminish quality of life compared with less invasive options. Thus, choosing a more aggressive intervention does not always improve, and may even worsen, patient outcomes. Indeed, \citet{wilt2017follow} showed that although surgery is clinically more aggressive, it failed to improve survival outcomes while adversely affecting quality of life for many patients. This illustrates that interventions can cause harm when applied too broadly, underscoring the importance of accounting for individual heterogeneity in treatment response. 
\end{example}

\begin{example}[Education]
In education policy, disciplinary measures intended to correct behavior can be counterproductive, leading to worse educational outcomes for certain individuals. Consider a school administrator choosing among expulsion, detention, or tutoring for a student exhibiting behavioral issues. While the goal is to improve the student's academic trajectory, severe disciplinary actions can sometimes actively induce failure on the very metric they aim to improve, as highlighted by studies on the negative impacts of exclusionary sanctions~\citep{morris2016punishment,mowen2016school}.




\end{example}

\subsection{Identifiability Assumptions}\label{sec:identi}
A key challenge in learning safe policies is that we observe the realized outcome $Y_t(a)$ only for the action actually taken, i.e., when $A_t = a$. However, the utility function $R_t(a)$ depends on the harm rate $\HR(X_t; a, a')$, which requires identifying the distribution of potential outcomes for each individual under different actions~\citep{Wu-Mao2025}. To address this, we introduce the following two identifiability assumptions.

\begin{assumption}[No Unmeasured Confounders]\label{ass:ign}
For each $t < T$, $X_t$ blocks all backdoor paths from $A_t$ to $X_{t+1}$ and from $A_t$ to $Y_t$.
\end{assumption}

Assumption \ref{ass:ign} is standard in causal inference~\citep{pearl2009causality} and RL. It implies that there are no unmeasured confounders between $A_t$ and $X_{t+1}$, as well as $A_t$ and $Y_t$. It is crucial to identify the treatment effect of $A_t$ on $Y_t$ at each time step $t$.
However, this assumption alone is insufficient to identify the harm rate, since its definition depends on the joint distribution of potential outcomes. 
To proceed, we adopt a copula-based approach to model the association between potential outcomes conditional on covariates~\citep{Lu-etal2025, Zhang-Yang2025}.

\begin{assumption}[Copula Models] \label{assump-copula}
    Let $f(Y_t(a)=\cdot, Y_t(a') = \cdot \mid X_t = x)$ be the joint density of $(Y_t(a), Y_t(a'))$ given $X=x$, 
    $f(Y_t(a) = \cdot \mid X_t = x)$ be the marginal density of $Y_t(a)$ given $X=x$,  
     $F(Y_t(a) = \cdot \mid X_t  =x)$ be the cumulative distribution function of $Y_t(a)$ given $X=x$, $c_{\rho}(\cdot, \cdot)$ be the specified copula function, and $\rho$ be the pre-specified correlation between $Y_t(a)$ and $Y_t(a')$ given $X$. Assume $f(Y_t(a)=y_a, Y_t(a') = y_{a'} \mid X_t = x) = c_{\rho}\big\{ F(Y_t(a) = y_a\mid X=x), F(Y_t(a') = y_{a'}\mid X=x) \big\} f(Y_t(a) = y_a \mid X=x) f(Y_t(a') = y_{a'} \mid X=x)$. 
\end{assumption}

 Copula models are classical tools for linking marginal and joint distributions~\citep{Jaworski-etal-compula2010, Bartolucci01062011, Sun-etal2024}. Given a known copula function and association parameter $\rho$, the joint distribution of $(Y_t(a), Y_t(a'))$ given covariates is identifiable from the marginal distributions of $Y_t(a)$ and $Y_t(a')$ given covariates. 
 For example, for the commonly used Gaussian copula with parameter $\rho$, under Assumptions \ref{assump1}–\ref{assump-copula}, $\HR(x; a, a')$ is identified as 
\begin{equation}\label{eqn:copulaharm}
    \Phi \left \{  (r(x, a') - r(x,a)) / \sigma(x; a, a') \right \},
\end{equation} 
where $\Phi(\cdot)$ is the cumulative distribution function  of a standard normal distribution, $r(x, a) = \E[ Y_t \mid X_t = x, A_t=a]$, and 
$\sigma(x; a, a') = \sqrt{\sigma^2(x,a) + \sigma^2(x,a')  - 2 \rho \sigma(x,a) \sigma(x,a')}$ with $\sigma^2(x,a) = \text{Var}(Y_t\mid X_t = x, A_t = a)$, see the Supplementary Material for details. Although the association parameter $\rho$ in copula models is not identifiable from the observed data, we recommend treating it as a sensitivity parameter in real-world applications. 
Varying $\rho$ across a plausible range, rather than holding it fixed, provides a way to evaluate the robustness of the resulting estimates and conclusions.
\section{Counterfactually Safe Policy Learning}\label{sec:policy}

In this section, we first introduce the proposed safe policy learning method, present its finite-sample properties, and then establish its connection to a constrained optimization problem. Finally, we extend our methodology to settings where the reference action is specified by a reference policy. 

\subsection{Safety-Aware Policy Learning}

The proposed counterfactually safe policy learning method involves two steps: (1) data preprocessing to obtain the observed pseudo-utility for each trajectory and (2) a Q-learning algorithm to learn the safety-aware policy.

The first step is summarized in Algorithm \ref{algo:est}. It leverages the observed data to estimate the utility for each individual at each time point, and thus incorporates safety considerations. This procedure transforms the raw outcomes into a safety-aware representation that explicitly characterizes counterfactual harm. 
After obtaining the transformed dataset $\widehat{\D}$ through Algorithm \ref{algo:est}, we proceed to policy learning, summarized in Algorithm \ref{algo:FQI}.

\begin{algorithm}[t]
\caption{Utility Estimation}
\label{algo:est}
\begin{algorithmic}[1]
\State \textbf{Input:} Dataset $\mathcal{D} = \{(X_{it}, A_{it}, Y_{it})_{i=1}^N, t = 0, \ldots, T\}$; a given reference action $a'$. 
\State Estimate $r(x,a)$, $\sigma(x, a)$ and 
      plug-in these estimators into \eqref{eqn:copulaharm} to construct the resulting harm-rate estimator 
      $\widehat{\HR}(x; a, a')$.


    \State Calculate the pseudo-utility $\widehat{R}_{it}$ as follows: 
    \begin{align*}
        \widehat{R}_{it} = \widehat{R}_t(A_{it}) &= Y_t(A_{it}) - \beta \cdot \widehat{\HR}(X_{it}; A_{it}, a') \label{eq:estimatedR-HR}. 
    \end{align*}
\State \textbf{Return:} Transformed dataset $\widehat{\D} = \{ (X_{it}, A_{it}, \widehat{R}_{it})_{i=1}^N, \, t = 0, \dots, T\}$. 
\end{algorithmic}
\end{algorithm}

\begin{algorithm}[t]
\caption{Fitted Q-Iteration}
\label{algo:FQI}
\begin{algorithmic}[1]
\State \textbf{Input:} Dataset $\tilde \D = \{(X_i, A_i, \widehat{R}_i, X_i')\}_{i=1}^n$, to be divided into $K$ batches $\tilde \D_1,\cdots,\tilde \D_K$, function class $\Q$ for action-value function, discount factor $\gamma$. 

\State \textbf{Initialize:} $Q_{0} \in \Q$ arbitrarily (e.g., $Q_{0}(x,a) = 0$). 
\For{$k = 1$ to $K$}
    \State Compute targets:
    $R_i^{(k)} \gets \widehat{R}_i + \gamma \cdot \max_{a' \in \mathcal{A}} Q_{k-1}(X'_i, a'),
    \quad  i \in \tilde \D_k.$
    \State Fit a new Q-function:
    $\widehat Q_{k} \gets \arg\min_{Q \in \Q} \sum_{i\in \tilde \D_k} \left( Q(X_i, A_i) - R_i^{(k)} \right)^2.$
\EndFor
\State \textbf{Return:} $\widehat Q_{K}$ and estimated optimal policy $\pi_{\widehat Q_{K}}(x) \in \arg\max_{a} \widehat Q_{K}(x,a)$.
\end{algorithmic}
\end{algorithm}

In Algorithm \ref{algo:FQI}, we aggregate the data across time as the input. Specifically, for each $t$, we take the triple $\{X_{it}, A_{it}, \widehat{R}_{it}\}$ obtained from Algorithm~\ref{algo:est} together with the next state $X_{i,t+1}$ (denoted as $X_i'$ in Algorithm \ref{algo:FQI} for ease of presentation), and flatten them into a pooled dataset $\tilde \D=\{(X_i, A_i, \widehat{R}_i, X'_i)\}_{i=1}^n$, where $n=NT$. We then use such quaternions to search for optimal stationary policy over all $n$ samples jointly. 
Algorithm \ref{algo:FQI} implements the Fitted Q-Iteration (FQI) framework, which leverages the pooled dataset to approximate the optimal Q-function through a sequence of supervised learning tasks. In each iteration, the algorithm computes the regression targets using the Bellman optimality operator based on the previous estimate $\widehat Q_{k-1}$, and updates the current state-action value function $\widehat Q_{k}$. After a pre-specified number of iterations $K$, we obtain the estimated Q-function which induces a stationary policy that we desire. 

Next, we present the finite-sample analysis of the proposed method, focusing on the regret, defined as the difference between the expected return under the optimal policy and that under the estimated policy, i.e. $J(\pi^*) - J(\pi_{\widehat{Q}_K})$. Our theoretical analysis relies on several regularity conditions summarized in Assumption \ref{ass:linear} below. 
\begin{assumption}[Regularity Conditions] \label{ass:linear}
Suppose the following conditions hold:

(i) Bounded Linear Class. We assume the action-value function class is of linear of the feature map
\begin{equation*}
    \Q = \{w^\top \phi(x,a): w\in \mathbb{R}^d, \|w\|_{\infty}\leq W\},
\end{equation*}

(ii) Completeness. 
For any $Q\in\Q$, $\T Q \in \Q$, where $\T$ is the Bellman operator. 

(iii) Feature Coverage. 
There exists a constant $\lambda_0>0$ such that
\begin{equation*}
    \lambda_{\min}( \E_{x,a\sim \mu_b}[\phi(x,a)\phi(x,a)^\top] ) \geq \lambda_0, 
\end{equation*}
where $\mu_b$ is the behavior distribution.

(iv) Bounded Outcomes. 
    There exists a constant $M$ such that $|Y_t| \leq M$.
    
\end{assumption}



Assumption \ref{ass:linear}(i) indicates that given a feature map $\phi$, we restrict our function class $\Q$ that is linear in these features with bounded coefficients~\citep{hu2025fast}. Assumption \ref{ass:linear}(ii) states that the function class $\Q$ is closed under the Bellman operator $\T$. Assumption \ref{ass:linear}(iii) is commonly required to guarantee the convergence of ordinary least squares estimators~\citep{wang2020statistical}. Assumption \ref{ass:linear}(iv) posits boundedness of the outcomes. 
These assumptions are mild and standard, widely adopted in the reinforcement learning~\citep{munos2005error,antos2008learning,uehara2022review}. In practice, linear function approximation with bounded parameters is sufficiently flexible to capture a wide range of value functions while maintaining computational efficiency~\citep{agarwal2019reinforcement}. Meanwhile, the closure property under the Bellman operator ensures the stability of iterative updates, which is critical for establishing theoretical convergence results~\citep{panaganti2022robust}.



\begin{assumption}[Uniform Estimation Rate] \label{ass:est_harm}
    There exists a constant $C_0 > 0$ such that the estimator $\widehat{\HR}(x; a, a')$ uniformly approximates the true $\textup{HR}(x; a, a')$ over all $(x, a)$ with error at most $C_0 \epsilon_{n}$, i.e., $\sup_{x, a} |   \widehat{\HR}(x;  a, a')  - \HR(x; a, a')  | \leq C_0 \epsilon_{n}$.
\end{assumption}

Assumption \ref{ass:est_harm} is a high-level condition regarding the convergence rates of the estimators for the harm rate. For $x \in \mathbb{R}^p$, suppose the $\HR(x; a, a')$ belong to the $(s, L, \mathbb{R}^p)$-H\"{o}lder class. Then the convergence rate is attained by balancing the $O(h^s)$ smoothing bias of a local polynomial estimator, and the $O(\sqrt{\log N / (N h^p)})$ uniform stochastic error \citep{Fan1996, Gyorfi2002}. Specifically, $h$ denotes the bandwidth, and $h^s$ term represents the approximation error of an $s$-order Taylor expansion. The $\log N$ factor arises from the concentration inequalities required to ensure uniform consistency over the covariate support. By selecting an optimal bandwidth $h \asymp (\log N / N)^{1/(2s+p)}$, we achieve a uniform convergence rate of $O\big( (\log N / N)^{s/(2s+p)} \big)$, which is the minimax optimal rate for this function class \citep{Stone1982, Tsybakov2009}.  
We present the regret bound for the policy $\pi_{\widehat{Q}_K}$ obtained after $K$ iterations of Algorithm \ref{algo:FQI}.

\begin{theorem}[Regret Bound]\label{thm:regret}

With probability at least $1-n^{-\kappa}-d\exp{(-n\lambda_0/8)}$, we have
\begin{equation*} 
    |J(\pi^*) - J(\pi_{\widehat{Q}_K})|\leq\frac{4\epsilon_{n} \sqrt{d}}{\lambda_0(1-\gamma)^2} + \frac{2CdM\kappa \log(n)}{(1-\gamma)^3\lambda_0 \sqrt{n}} + \frac{2\gamma^K (M+\beta)}{(1-\gamma)^2},
\end{equation*}
for any $\kappa > 0$ and some constant $C > 0$, where $\beta$ is the risk-aversion factor defined in \eqref{eq:R-hr}.  
\end{theorem}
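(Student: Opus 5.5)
We will follow the standard error-propagation route for fitted Q-iteration, with one extra source of error from Stage 1. Let $Q^*$ denote the optimal action-value function for the true utility $R_t$, and $\T$ the Bellman optimality operator built from the true expected utility $r(x,a)-\beta\HR(x;a,a')$. Let $\widehat\T$ denote the operator built from the estimated utility, which uses $\widehat{\HR}$ in place of $\HR$. By Assumption~\ref{ass:est_harm}, $\|\widehat\T Q-\T Q\|_\infty\le \beta C_0\epsilon_n$ for every $Q$; up to constants this is absorbed into $\epsilon_n$.

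\textbf{Step 1: performance-difference reduction.} We first invoke the standard fact that for a greedy policy $\pi_Q$,
\[
J(\pi^*)-J(\pi_Q)\le \frac{2}{1-\gamma}\|Q^*-Q\|_\infty .
\]
This reduces the regret bound to a sup-norm bound on $\|Q^*-\widehat Q_K\|_\infty$. Telescoping and using that $\T$ is a $\gamma$-contraction gives
\[
\|Q^*-\widehat Q_K\|_\infty\le \sum_{k=1}^K\gamma^{K-k}\|\widehat Q_k-\T\widehat Q_{k-1}\|_\infty+\gamma^K\|Q^*-Q_0\|_\infty .
\]
Since $|R_t|\le M+\beta$, we have $\|Q^*\|_\infty\le (M+\beta)/(1-\gamma)$, and with $Q_0=0$ the last term produces $2\gamma^K(M+\beta)/(1-\gamma)^2$. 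This is the initialization bias term.

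\textbf{Step 2: per-iteration error.} We split
\[
\|\widehat Q_k-\T\widehat Q_{k-1}\|_\infty\le \|\widehat Q_k-\widehat\T\widehat Q_{k-1}\|_\infty+\|\widehat\T\widehat Q_{k-1}-\T\widehat Q_{k-1}\|_\infty .
\]
The second piece is the Stage-1 error. Because completeness (Assumption~\ref{ass:linear}(ii)) puts the regression target in $\Q$, the first piece is an OLS parameter error. Writing $\widehat Q_k=\widehat w_k^\top\phi$ and $\widehat\T\widehat Q_{k-1}=w_k^\top\phi$, we bound $|\phi^\top(\widehat w_k-w_k)|$ by $\|\phi\|\,\|\widehat w_k-w_k\|$, taking $\|\phi\|\le1$. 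Then
\[
\widehat w_k-w_k=\widehat\Sigma^{-1}\frac1{n}\sum_i\phi_i\xi_i ,
\]
where $\xi_i$ is the noise in the target $R_i^{(k)}$, bounded by $O((M+\beta)/(1-\gamma))$. A matrix Chernoff bound gives $\lambda_{\min}(\widehat\Sigma)\ge\lambda_0/2$ with probability at least $1-d\exp(-n\lambda_0/8)$. A Hoeffding/Bernstein bound on each of the $d$ coordinates of $\frac1n\sum\phi_i\xi_i$, with a union bound, gives an error of $O(\sqrt d\,(M/(1-\gamma))\kappa\log n/\sqrt n)$ with probability at least $1-n^{-\kappa}$. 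Combining these yields a term of order $dM\kappa\log n/(\lambda_0(1-\gamma)\sqrt n)$.

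The Stage-1 error enters in the same way. If it is handled through the regression rather than the sup-norm operator bound, the perturbation of the targets is at most $C_0\epsilon_n$ coordinate-wise, and $\widehat\Sigma^{-1}$ magnifies it by $2/\lambda_0$ with a $\sqrt d$ factor. This gives a $2\sqrt d\,\epsilon_n/\lambda_0$ contribution per iteration.

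\textbf{Step 3: assembly and the main obstacle.} Summing the geometric series contributes $1/(1-\gamma)$, and the performance-difference step contributes $2/(1-\gamma)$. Together these yield the three terms with the stated powers of $(1-\gamma)$.

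The main obstacle is the dependence in the FQI regressions: $\widehat Q_{k-1}$ depends on data, so the noise $\xi_i$ is not independent of the target function. The batch splitting in Algorithm~\ref{algo:FQI} is the fix. Batch $\tilde\D_k$ is independent of $\widehat Q_{k-1}$, so conditional on previous batches the noise terms are bounded and mean zero, and the concentration argument applies. Because the batches have size $n/K$, we must either accept a union bound over $k$ or absorb $K$ into the constant and the $\log n$ factor (noting $K\lesssim\log n$ suffices to make the bias term negligible). Tracking this carefully to land exactly on $\sqrt n$ and the stated probability is where we expect the work to lie.
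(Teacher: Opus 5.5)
Your skeleton matches the paper's. It uses the reduction $|J(\pi^*)-J(\pi_{\widehat Q_K})|\le \frac{2}{1-\gamma}\|Q^*-\widehat Q_K\|_\infty$ and contraction-based propagation of one-step errors. It also uses the regression-based handling of Stage 1: the OLS fits with $\hat r_i$ and with the true utilities $r_i$ differ by $\hat\Sigma^{-1}\sum_i\phi(x_i,a_i)(\hat r_i-r_i)$, which is bounded by $2\sqrt d\,\epsilon_n/\lambda_0$ on the matrix-Chernoff event. Commit to that route and drop the $\widehat\T$ split. Completeness is assumed only for $\T$, and $\widehat\T Q-\T Q=-\beta(\widehat{\HR}-\HR)$ need not be linear in $\phi$. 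So $\widehat\T\widehat Q_{k-1}$ need not lie in $\Q$, and your first piece would not be a pure OLS variance term. Your initialization term with $M+\beta$ is the one consistent with the statement.

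The genuine gap is how you handle the data dependence of $\widehat Q_{k-1}$. Batch splitting cannot deliver the stated bound. With splitting, the $k$-th regression uses only $n/K$ transitions. Its eigenvalue event then fails with probability $d\exp(-n\lambda_0/(8K))$, and over all $K$ iterations you pay $Kd\exp(-n\lambda_0/(8K))$ and $Kn^{-\kappa}$. The statistical term is also inflated by $\sqrt K$. The bound is asserted for arbitrary $K$, with the $\gamma^K$ term meant to vanish as $K$ grows and with $C$ free of $K$. So neither absorbing $K$ into $C$ nor restricting to $K\lesssim\log n$ is legitimate; the latter would still leave a $\log^{3/2}n/\sqrt n$ rate. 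Your route therefore proves a strictly weaker statement.

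The missing ingredient is a deviation bound that is \emph{uniform over the function class}. The paper invokes Lemma~\ref{lem:HuLe7} (Lemma 7 of Hu, Kallus and Uehara), $\P(\sup_{Q\in\Q}\|\hat w_Q^\top\phi-\T Q\|_\infty\ge\delta)\le 6d\exp\{-\lambda_0^2 n\delta^2/(5174\,d^2(M+W)^2)\}$. Here $\hat w_Q$ is the OLS fit on all $n$ pooled transitions with targets built from an arbitrary $Q\in\Q$; the bound comes from covering the bounded $d$-dimensional parameter set. Because it holds simultaneously for every $Q$, it covers each data-dependent iterate with no splitting. Taking $W\lesssim M/(1-\gamma)$ and choosing $\delta$ so the right side equals $n^{-\kappa}$ gives $\delta_{\mathrm{stat}}$, including its factor $d$. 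The Stage-1 perturbation does not depend on $Q$, so it is uniform for free on the single full-sample event $\lambda_{\min}(\hat\Sigma)\ge n\lambda_0/2$. A union bound over these two events gives exactly $1-n^{-\kappa}-d\exp(-n\lambda_0/8)$. Note that the paper's argument therefore analyzes FQI with every regression run on the full pooled sample, not the batched version in Algorithm~\ref{algo:FQI}.
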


Theorem \ref{thm:regret} implies that the regret bound can be decomposed into three parts: (i) The first term captures the estimation error induced from the estimation of harm-related utility $\widehat{r}$ in Algorithm \ref{algo:est}, which is proportional to $\epsilon_{n}$, the estimation error of the harm rate, and the square root of $d$. (ii) The second term arises from the one-step FQI error caused by using Algorithm \ref{algo:FQI}. Its magnitude decays at a rate of $\log(n) n^{-1/2}$, which aligns with~\citet{hu2025fast}. (iii) the initialization bias, which decays exponentially fast with the number of iterations $K$. 
In addition, the regret bound increases with the $(1- \gamma)^{-1}$ term, which can be interpreted as the `horizon' in episodic tasks. Their dependency on the reward upper bound $M$ and the smallest eigenvalue $\lambda_0$ is aligned with existing findings in the literature~\citep{chen2019information,wang2025counterfactually}.

\subsection{Safety Assurance and Harm Bound}

We define the value functions for the outcome component and the harm component as the expected discounted sums of their respective signals $J_{Y}(\pi) = \mathbb{E}^{\pi} \left[ \sum_{t=0}^{+\infty} \gamma^t Y_t \right]$ and $J_{\HR}(\pi) = \mathbb{E}^{\pi} \left[ \sum_{t=0}^{+\infty} \gamma^t \HR(X_t; A_t, a') \right]$. 
Theorem \ref{thm:regret} establishes the convergence rate of the total utility regret $J = J_Y - \beta J_{\HR}$. However, in safety-critical scenarios, it is crucial to explicitly bound the \textit{excess harm} incurred by the learned policy, i.e., $J_{\HR}(\pi_{\widehat{Q}_K}) - J_{\HR}(\pi^*)$. 

To derive a tight bound on the excess harm, we introduce the \textit{Margin Condition} (also known as the Tsybakov noise condition,~\citeauthor{tsybakov2004optimal}, \citeyear{tsybakov2004optimal}). This condition characterizes the difficulty of the decision-making problem by quantifying the probability mass of states where the optimal action is hard to distinguish from suboptimal ones. This condition is widely used in statistical learning and reinforcement learning to characterize the difficulty of identifying the optimal action \citep{audibert2007tuning, farahmand2011action}.

\begin{assumption}[Margin Condition]\label{ass:margin}
    Let $\Delta(x) = Q^*(x, \pi^*(x)) - \max_{a \neq \pi^*(x)} Q^*(x, a)$ be the action gap of the optimal Q-function. We assume there exist constants $C_{\Delta} > 0$ and $\alpha > 0$ such that for any threshold $t > 0$:
    \begin{equation}  \label{eq3}
        \Pr\left( 0 < \Delta(x) \leq t \right) \leq C_{\Delta} t^\alpha.
    \end{equation}
\end{assumption}

A larger $\alpha$ implies that the action gap is bounded away from zero for most states, making the optimal policy easier to identify (see \citet{audibert2007fast} for more interpretation). Leveraging this assumption and the total regret bound, we establish the following bound on the safety violation (or excess harm).

\begin{theorem}[Harm Bound]\label{thm:harm_margin_regret}
    Suppose Assumption \ref{ass:margin} holds. Let $\mathcal{R}_{n}$ be the upper bound of the total regret derived in Theorem \ref{thm:regret}. 
    With probability at least $1-n^{-\kappa}-d\exp{(-n\lambda_0/8)}$, the difference in expected harm between the learned policy $\pi_{\widehat{Q}_K}$ and the optimal policy $\pi^*$ satisfies that: 
    \begin{equation*} 
        | J_{\HR}(\pi^*) - J_{\HR}(\pi_{\widehat{Q}_K}) | \leq \frac{2}{1-\gamma} (C_{\Delta} + 1) \left( (1-\gamma)\mathcal{R}_{n} \right)^{\frac{\alpha}{\alpha+1}},
    \end{equation*}
 where $C_{\Delta}$ is the constant in \eqref{eq3}. 
\end{theorem}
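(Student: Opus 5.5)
The plan is to turn the total-utility regret bound of Theorem~\ref{thm:regret} into a bound on how often, under the discounted visitation distribution of $\pi_{\widehat{Q}_K}$, the learned policy picks a different action than $\pi^*$. Each such disagreement changes the per-step harm by at most one, since $\HR\in[0,1]$.

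\textbf{Step 1 (harm difference via disagreements).} Let $d^{\pi}$ denote the normalized discounted state-visitation distribution of $\pi=\pi_{\widehat{Q}_K}$. I would apply the performance difference lemma to the harm signal, with the Q-function $Q^{\pi^*}_{\HR}$ of the harm component under $\pi^*$. This gives
\[
J_{\HR}(\pi)-J_{\HR}(\pi^*)=\frac{1}{1-\gamma}\E_{x\sim d^{\pi}}\bigl[Q^{\pi^*}_{\HR}(x,\pi(x))-Q^{\pi^*}_{\HR}(x,\pi^*(x))\bigr].
\]
Since $0\le Q_{\HR}^{\pi^*}\le 1/(1-\gamma)$, the integrand is bounded by $\I(\pi(x)\neq\pi^*(x))/(1-\gamma)$. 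Crudely, this yields $|J_{\HR}(\pi^*)-J_{\HR}(\pi)|\le \frac{c}{1-\gamma}\,\Pr_{d^\pi}(\pi(x)\neq\pi^*(x))$ with an absolute constant $c$. Some normalization slack in the factors of $(1-\gamma)$ is acceptable here, and I would track it so as to match the stated prefactor $2/(1-\gamma)$.

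\textbf{Step 2 (regret controls the gap-weighted disagreement).} Applying the performance difference lemma to the total utility gives
\[
J(\pi^*)-J(\pi)=\frac{1}{1-\gamma}\E_{x\sim d^{\pi}}\bigl[\Delta^*(x)\bigr],\qquad \Delta^*(x)=Q^*(x,\pi^*(x))-Q^*(x,\pi(x)).
\]
Here $\Delta^*(x)\ge\Delta(x)\I(\pi(x)\ne\pi^*(x))$. Hence $\E_{d^\pi}[\Delta(x)\I(\pi\neq\pi^*)]\le(1-\gamma)\mathcal{R}_n$ on the high-probability event of Theorem~\ref{thm:regret}. This explains why $(1-\gamma)\mathcal{R}_n$ appears inside the power.

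\textbf{Step 3 (Tsybakov peeling).} For any $t>0$, I would split
\[
\Pr(\pi\ne\pi^*)\le \Pr(0<\Delta\le t)+\Pr(\Delta>t,\ \pi\neq\pi^*)\le C_\Delta t^\alpha+\frac{(1-\gamma)\mathcal{R}_n}{t},
\]
using Assumption~\ref{ass:margin} for the first term and Markov's inequality together with Step 2 for the second. States with $\Delta=0$ contribute no harm difference if ties are broken consistently, or they can be absorbed into the convention. Choosing $t=((1-\gamma)\mathcal{R}_n)^{1/(\alpha+1)}$ gives $(C_\Delta+1)((1-\gamma)\mathcal{R}_n)^{\alpha/(\alpha+1)}$. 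Combining this with Step 1 yields the claimed bound, with the factor 2 absorbing the symmetric absolute value and the normalization.

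\textbf{Main obstacle.} The delicate point is the measure in Assumption~\ref{ass:margin}. It is stated under an unspecified $\Pr$, while Steps 2 and 3 need it under $d^{\pi_{\widehat{Q}_K}}$. I would interpret the margin condition as holding under the visitation distribution, or uniformly over the relevant visitation distributions. The alternative is to invoke a concentrability-type bound, which would change the constants; I would state the interpretation explicitly. A secondary subtlety is the harm Q-function in Step 1: using $Q^{\pi^*}_{\HR}$ rather than a per-step $\HR$ difference is what makes the disagreement argument valid.
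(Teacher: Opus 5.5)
Your Steps 2 and 3 follow the paper's argument exactly: regret gives a gap-weighted disagreement bound, then Tsybakov peeling with $t=((1-\gamma)\mathcal{R}_n)^{1/(\alpha+1)}$. Your inequality $\Delta^*(x)\ge\Delta(x)\,\I(\hat\pi(x)\neq\pi^*(x))$ is in fact more accurate than the equality the paper writes. The gap is in Step 1, and the paper's proof has the same gap.

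Write $\hat\pi=\pi_{\widehat{Q}_K}$. Steps 2--3 need the \emph{normalized} visitation $d^{\hat\pi}$, because that is where the $(1-\gamma)$ inside the power comes from. With that normalization, the performance difference lemma contributes $1/(1-\gamma)$ and the range of $Q^{\pi^*}_{\HR}$ contributes another $1/(1-\gamma)$. Your own ingredients therefore give $|J_{\HR}(\pi^*)-J_{\HR}(\hat\pi)|\le(1-\gamma)^{-2}\,\mathbb{P}_{err}$, not $c\,(1-\gamma)^{-1}\mathbb{P}_{err}$ with an absolute constant $c$.

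This is not slack you can track away. The unnormalized occupancy is $d^{\hat\pi}/(1-\gamma)$, so switching to it just moves the factor from Step 1 into Step 3. The loss is also genuine. Take a deterministic initial state $x_0$ at which $\pi^*$ and $\hat\pi$ disagree, with equal immediate harm there. Let $\pi^*(x_0)$ lead to an absorbing region with per-step harm $0$, let $\hat\pi(x_0)$ lead to one with per-step harm $1$, and let the two policies agree everywhere else. Then $\mathbb{P}_{err}=d^{\hat\pi}(x_0)=1-\gamma$, while $|J_{\HR}(\pi^*)-J_{\HR}(\hat\pi)|=\gamma/(1-\gamma)$. This exceeds $\frac{2}{1-\gamma}\mathbb{P}_{err}=2$ once $\gamma>2/3$, and Step 1 never uses optimality of $\pi^*$.

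Along this route you therefore obtain $(1-\gamma)^{-2}(C_\Delta+1)((1-\gamma)\mathcal{R}_n)^{\alpha/(\alpha+1)}$. That implies the stated bound only when $\gamma\le 1/2$. The paper's final step asserts $|J_{\HR}(\pi^*)-J_{\HR}(\pi_{\widehat{Q}_K})|\le\frac{2}{1-\gamma}\mathbb{P}_{err}$ by a ``simulation lemma''. In the proof of the outcome corollary it visibly writes the $1/(1-\gamma)$ from the performance difference lemma and then drops it. Reaching the stated prefactor needs more than a bound on the disagreement probability, or else the prefactor in the statement must change.

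There is a second, smaller gap. Your claim that states with $\Delta(x)=0$ contribute nothing ``if ties are broken consistently'' is unjustified. $\hat\pi$ is greedy for $\widehat{Q}_K$, not $Q^*$, so you cannot force it to agree with $\pi^*$ on ties. Tied actions can carry different harm rates, and with more than two actions $\hat\pi$ may even pick a strictly suboptimal action at a state where $\Delta(x)=0$. The margin condition only controls $\Pr(0<\Delta(x)\le t)$, so the split in Step 3 needs an extra assumption such as $\Pr(\Delta(x)=0)=0$; the paper also uses this tacitly.

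Your concern about which measure the margin condition is stated under is well placed. The paper simply applies it under $d^{\hat\pi}$ without comment.
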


Theorem~\ref{thm:harm_margin_regret} shows that the bound on excess harm  of the learned policy depends critically on the bound of the total regret  $\mathcal{R}_{n}$. 
Meanwhile, under a favorable margin condition (large $\alpha$), the  bound of excess harm converges at a rate approaching that of the total regret. In addition, by an argument analogous to that in Theorem~\ref{thm:harm_margin_regret}, we can derive the bound for the outcome, as shown in Corollary \ref{prop:outcome_bound}. 
\begin{corollary}[Outcome Bound]\label{prop:outcome_bound}
    Under the same conditions as Theorem \ref{thm:harm_margin_regret}, 
    then with probability at least $1-n^{-\kappa}-d\exp{(-n\lambda_0/8)}$, the difference in expected outcome between the learned policy and the optimal policy is bounded by:
    \begin{equation*}
        | J_{Y}(\pi^*) - J_{Y}(\pi_{\widehat{Q}_K}) | \leq \frac{2 M}{1-\gamma} (C_{\Delta} + 1) \left( (1-\gamma)\mathcal{R}_{n} \right)^{\frac{\alpha}{\alpha+1}}.
    \end{equation*}
\end{corollary}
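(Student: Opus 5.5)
The plan is to rerun the proof of Theorem~\ref{thm:harm_margin_regret} with the per-step harm signal $\HR(X_t;A_t,a')\in[0,1]$ replaced by the outcome $Y_t$, which by Assumption~\ref{ass:linear}(iv) satisfies $|Y_t|\le M$. Since $\HR\in[0,1]$, the harm bound effectively uses a range bound of order $1$. Swapping in $Y_t$ only changes that range constant to $M$, and this is exactly the extra factor $M$ in the corollary. We work on the same high-probability event as in Theorem~\ref{thm:regret}, on which the total regret satisfies $J(\pi^*)-J(\hat\pi)\le\mathcal{R}_n$, where $\hat\pi=\pi_{\widehat{Q}_K}$.

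The first step is a performance-difference decomposition. Write $r_Y(x,a)=r(x,a)$, and let $d^{\hat\pi}$ be the normalized discounted state-occupancy measure of $\hat\pi$. Decomposing $J_Y(\pi^*)-J_Y(\hat\pi)$ along the trajectory of $\hat\pi$ gives a term of the form
\[
\frac{1}{1-\gamma}\E_{x\sim d^{\hat\pi}}\big[g_Y(x)\,\I\{\hat\pi(x)\neq\pi^*(x)\}\big],
\]
where $g_Y$ is an outcome-specific advantage-type quantity. Its magnitude is at most $2M$ per mismatched state, up to the normalization already used in the harm proof. The decomposition therefore reduces to
\[
|J_Y(\pi^*)-J_Y(\hat\pi)|\le\frac{2M}{1-\gamma}\,\Pr_{d^{\hat\pi}}\big(\hat\pi(x)\ne\pi^*(x)\big).
\]
This is the same reduction as in the harm case, with $1$ replaced by $M$.

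The second step bounds the mismatch probability using the margin condition, exactly as in Theorem~\ref{thm:harm_margin_regret}. Applying the same performance-difference lemma to the total utility gives
\[
(1-\gamma)\mathcal{R}_n\ge\E_{d^{\hat\pi}}\big[\Delta(x)\,\I\{\hat\pi\ne\pi^*\}\big].
\]
For any threshold $t>0$, split the mismatch event according to whether $\Delta(x)\le t$. Using Assumption~\ref{ass:margin} on the first part and Markov's inequality on the second,
\[
\Pr(\hat\pi\ne\pi^*)\le C_\Delta t^\alpha+\frac{(1-\gamma)\mathcal{R}_n}{t}.
\]
Choosing $t=((1-\gamma)\mathcal{R}_n)^{1/(\alpha+1)}$ makes the two terms comparable and yields
\[
\Pr(\hat\pi\ne\pi^*)\le(C_\Delta+1)\big((1-\gamma)\mathcal{R}_n\big)^{\alpha/(\alpha+1)}.
\]
Combining this with the first step gives the stated bound.

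The main obstacle is the change of measure. The margin condition is stated under the state distribution in \eqref{eq3}, while the mismatch probability is taken under $d^{\hat\pi}$. I would reuse whatever convention Theorem~\ref{thm:harm_margin_regret} adopts, namely interpreting the probability in \eqref{eq3} under the relevant occupancy measure, so that the argument goes through verbatim. The only genuinely new check is that the outcome advantage is bounded by $2M$ in the same normalized sense that the harm advantage is bounded by $2$.
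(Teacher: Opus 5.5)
Your route is the paper's route: a performance-difference reduction to the mismatch probability under $d^{\hat\pi}$, followed by the margin/Markov split. Your step 2 is fine; you rightly write it as an inequality, where the paper writes an equality that holds only as $\le$. The gap is in step 1, exactly at the check you defer. The inequality $|J_Y(\pi^*)-J_Y(\hat\pi)|\le\frac{2M}{1-\gamma}\Pr_{d^{\hat\pi}}(\hat\pi(x)\neq\pi^*(x))$ is false.

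Take the normalized occupancy $d^{\hat\pi}$, which is the one for which step 2 produces $(1-\gamma)\mathcal{R}_n$ and the margin condition is a probability statement. The performance-difference lemma gives
\[
J_Y(\pi^*)-J_Y(\hat\pi)=\frac{1}{1-\gamma}\,\E_{x\sim d^{\hat\pi}}\big[g_Y(x)\,\I\{\hat\pi(x)\neq\pi^*(x)\}\big],\qquad g_Y(x)=Q_Y^{\pi^*}(x,\pi^*(x))-Q_Y^{\pi^*}(x,\hat\pi(x)).
\]
Here $g_Y$ is a difference of outcome $Q$-functions, not of one-step outcomes. A single deviation changes the whole continuation, so the only available bound is $|g_Y|\le 2\sup|Q_Y^{\pi^*}|\le 2M/(1-\gamma)$, not $2M$.

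This bound is attained. Let $x_0$ be visited only at $t=0$. Let $\pi^*(x_0)$ yield outcome $M$ and move to an absorbing state paying $M$ forever, and let $\hat\pi(x_0)$ yield $-M$ and move to one paying $-M$ forever. Let the policies agree elsewhere. With reference action $a'=\pi^*(x_0)$ and identical actions at the absorbing states, $\pi^*$ is $J$-optimal. Then $J_Y(\pi^*)-J_Y(\hat\pi)=2M/(1-\gamma)$, whereas $\Pr_{d^{\hat\pi}}(\hat\pi\neq\pi^*)=d^{\hat\pi}(x_0)=1-\gamma$, so your display would give only $2M$.

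Switching to the unnormalized occupancy removes the prefactor but inflates the mismatch mass by $(1-\gamma)^{-1}$, so no choice of normalization rescues the step. Your argument therefore proves
\[
|J_Y(\pi^*)-J_Y(\hat\pi)|\le\frac{2M}{(1-\gamma)^2}\,(C_\Delta+1)\big((1-\gamma)\mathcal{R}_n\big)^{\frac{\alpha}{\alpha+1}},
\]
which is one factor $(1-\gamma)^{-1}$ weaker than stated. The paper's own proof makes the same slip. It bounds the $Q_Y^{\pi^*}$ gap by $2M/(1-\gamma)$, keeps the $\frac{1}{1-\gamma}$ prefactor, and then writes $\frac{2M}{1-\gamma}$. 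The harm proof it is modeled on has the same issue with $\HR$: the gap there is of order $1/(1-\gamma)$, not $2$. So deferring to ``the normalization already used in the harm proof'' imports the error rather than resolving it.

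There are two smaller points you should make explicit rather than inherit. First, the split $\Pr(\hat\pi\neq\pi^*,\,\Delta\le t)\le\Pr(0<\Delta\le t)$ discards mismatches on $\{\Delta(x)=0\}$. Step 2 gives no control there, because the weight $\Delta(x)$ vanishes, yet such mismatches can change $J_Y$. You therefore need mismatches on that set to be null, for example a unique optimal action $d^{\hat\pi}$-a.s. Alternatively, state the margin condition as $\Pr(\Delta(x)\le t)\le C_\Delta t^\alpha$. Second, since $\hat\pi$ is data-dependent, the margin condition must hold uniformly over the occupancy measures $d^{\pi_Q}$, $Q\in\Q$. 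That is the precise form of the convention you propose to adopt.
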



\subsection{Further Discussion on the Safe Policy} \label{sec3-3}
We provide additional discussion of the proposed method by extending the definition of the harm rate and exploring an alternative form of the target policy. 

{\bf Policy-Relative Harm and Safety-Aware Policy Improvement}.  
In Definition \ref{def-harm-rate}, the harm rate $\text{HR}(x; a, a')$ was defined with respect to a fixed reference action $a'$, which remains invariant across time steps.
To allow greater flexibility, we can instead specify the reference action through a reference policy $\pi^0 = \{\pi^0_1,...,\pi^0_T\}$.
Building on this adjustment, we could develop a safety-aware policy improvement strategy that reduces the harm rate relative to the reference policy $\pi^0$. 


\begin{definition}[Policy-Relative Harm] \label{def2}
For a given reference policy $\pi^{0}$, define the  policy-relative harm rate at time $t$ with the current sate $X_t = x$ as 
\begin{equation}
    \HR(x; \pi, \pi^0)=\HR(x; \pi_t(x),\pi^0_t(x)).
\end{equation}
\end{definition}

Definition \ref{def2} naturally extends the concept of harm rate in Definition \ref{def-harm-rate} from the action level to the policy level, thereby enabling comparisons between policies. Starting from any feasible baseline policy $\pi^0$, we can incorporate this definition into the policy learning framework to explicitly account for potential harm. This approach allows us to search for improved policies that consider harm and ultimately obtain a policy that is safer than $\pi^0$.

{\bf An Alternative Formulation of the Target Policy}.  
In Section \ref{sec2-2}, we define the target policy $\pi^*$ that maximizes $J(\pi) = J_Y(\pi) -  \beta \cdot J_{\HR}(\pi)$, where  $\beta$ was interpreted as a risk-aversion factor.
An alternative formulation of the target policy can be constructed by directly specifying the final harm rate induced by the policy. Specifically, we define $\bar{\pi}^*$ as the policy that satisfies 
 \begin{align*} 
 \begin{cases}
 \max_{\pi} & J_Y(\pi) \\
  \text{s.t.} &   J_{\HR}(\pi) \le \delta.
 \end{cases}
 \end{align*} 
For a given $\beta$, we can show that the policy $\bar{\pi}^*$  coincides with $\pi^*$ for an appropriate choice of $\delta$, see the Supplementary Material for more details.  

\section{Numerical Experiments}\label{sec:num} 

In this section, we evaluate the finite-sample performance of the proposed harm-aware policy learning method through simulation studies and further demonstrate its practical effectiveness using a real-world application. Throughout this section, the competing methods are summarized as follows: 
\begin{itemize}
    \item[(1)] Harm-unaware policy ({\bf Unaware}). 
It follows the same learning procedure as the proposed approach (Algorithm \ref{algo:FQI}), except that the estimated utility $\hat r_i$ is replaced by the observed outcome $y_i$. In other words, it ignores potential harm and seeks to learn a policy that maximizes the expected discounted cumulative outcome alone.

  \item[(2)] Behavior policy ({\bf Behavior}). In the simulation study, this corresponds to the data-generating mechanism of the action $A$, which is known. In the real-world application, it is approximated by the empirical frequency of each action, computed as the number of times the action is taken divided by the total sample size. 

  \item[(3)]  Random policy ({\bf Random}). 
It selects actions uniformly at random at each time step for every state, without incorporating information about past outcomes or state. 
\end{itemize} 
Moreover, the implementation details of the proposed method are provided in the Supplementary Material.

\subsection{Simulation}

In the simulation, we consider the following two data-generating mechanisms: 

 {\bf Linear}. The state transition function is linear: $X_{i,t+1} = 0.8X_{i,t} - 0.2 + 0.3A_{i,t} + \omega_{i,t}$. The binary action $A_{i,t}\in\{0,1\}$ 
 is generated according to
 $\pi^b(A_{i,t}=1 \mid X_{i,t})= 1/(1+e^{-0.5 X_{i,t}})$. The outcome function is also linear:  $Y_{i,t} = 0.3 + 0.4 X_{i,t} - 0.6 A_{i,t} X_{i,t} + \nu_{i,t}$, where $\omega_{i,t}\sim N(0,0.1)$ and $\nu_{i,t}\sim N(0,0.05)$ represent independent random noises.  
 
{\bf Non-Linear}. The state transition function is non-linear: $X_{i,t+1} = \tanh(0.7 X_{i,t} + 0.5A_{i,t} - 0.25) + 0.25\sin(1.3X_{i,t} + 0.5A_{i,t}) + \omega_{i,t}$. The binary action $A_{i,t}\in\{0,1\}$ is generated from $\pi^b(A_{i,t}=1\mid X_{i,t})= 1/(1+e^{-0.5 X_{i,t}})$. The outcome function is also non-linear: 
$Y_{i,t} = 0.3 + 0.25\sin(X_{i,t}+0.4A_{i,t}) + 0.15 (X_{i,t}+0.3A_{i,t})^2+0.2A_{i,t}\cos(1.5X_{i,t})-0.3A_{i,t}+ \nu_{i,t}$, where $\omega_{i,t}\sim N(0,0.1)$ and $\nu_{i,t}\sim N(0,0.1)$ represent independent random noises.   

The above setups include both linear and nonlinear cases to assess the robustness of the proposed method.
Each simulation scenario is replicated 100 times, and the methods are evaluated using two metrics: 
the \emph{discounted outcome} and the \emph{average harm}. Specifically, for a general learned policy $\hat \pi$, we define the discounted outcome under $\hat \pi$ as $(NT)^{-1}\sum_{i, t} \gamma^t Y_{i,t}(\hat\pi)$ with $Y_{i,t}(\hat\pi) = \hat \pi(X_{i,t}) Y_{i,t}(1) + (1- \hat \pi(X_{i,t})) Y_{i,t}(0)$, which quantifies the overall benefit, and the average harm under $\hat \pi$ as $(NT)^{-1}\sum_{i,t} \{Y_{i,t}(0) - Y_{i,t}(1)\}  \cdot \I\{Y_{i,t}(0) > Y_{i,t}(1)\}$, which captures the mean magnitude of harm. 



\begin{figure}[htbp]
    \centering
    \includegraphics[width=0.9\textwidth]{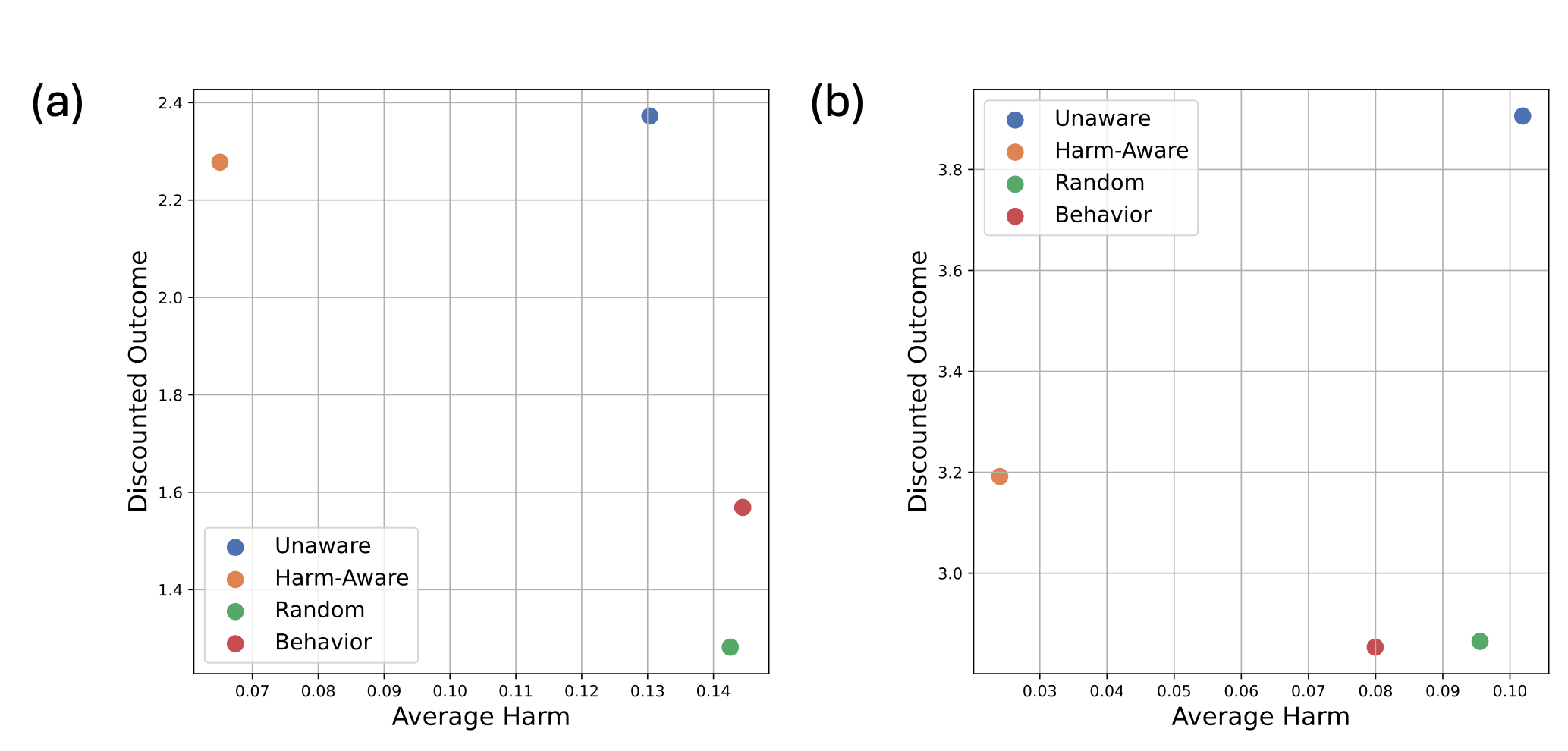}
    \caption{Comparisons of outcome versus harm under (a) linear setting and with risk-aversion factor  $\beta=0.5$ and (b) non-linear setting with risk-aversion factor $\beta=1.0$.}
    \label{fig:outcome_harm}
\end{figure}

Figure~\ref{fig:outcome_harm} presents the trade-off between discounted outcome and average harm across different policy learning methods, with a sample size of $N = 1000$ and a time horizon of $T = 20$. In both the linear and nonlinear cases, our harm-aware method achieves the lowest average harm, well below all competing approaches. In terms of discounted outcome, it attains the second-best performance, just slightly below the harm-unaware method in the linear case. These results demonstrate that our approach effectively balances outcome improvement and harm reduction, indicating its practical utility.

\begin{figure}[h]
    \centering
    \includegraphics[width=0.9\textwidth]{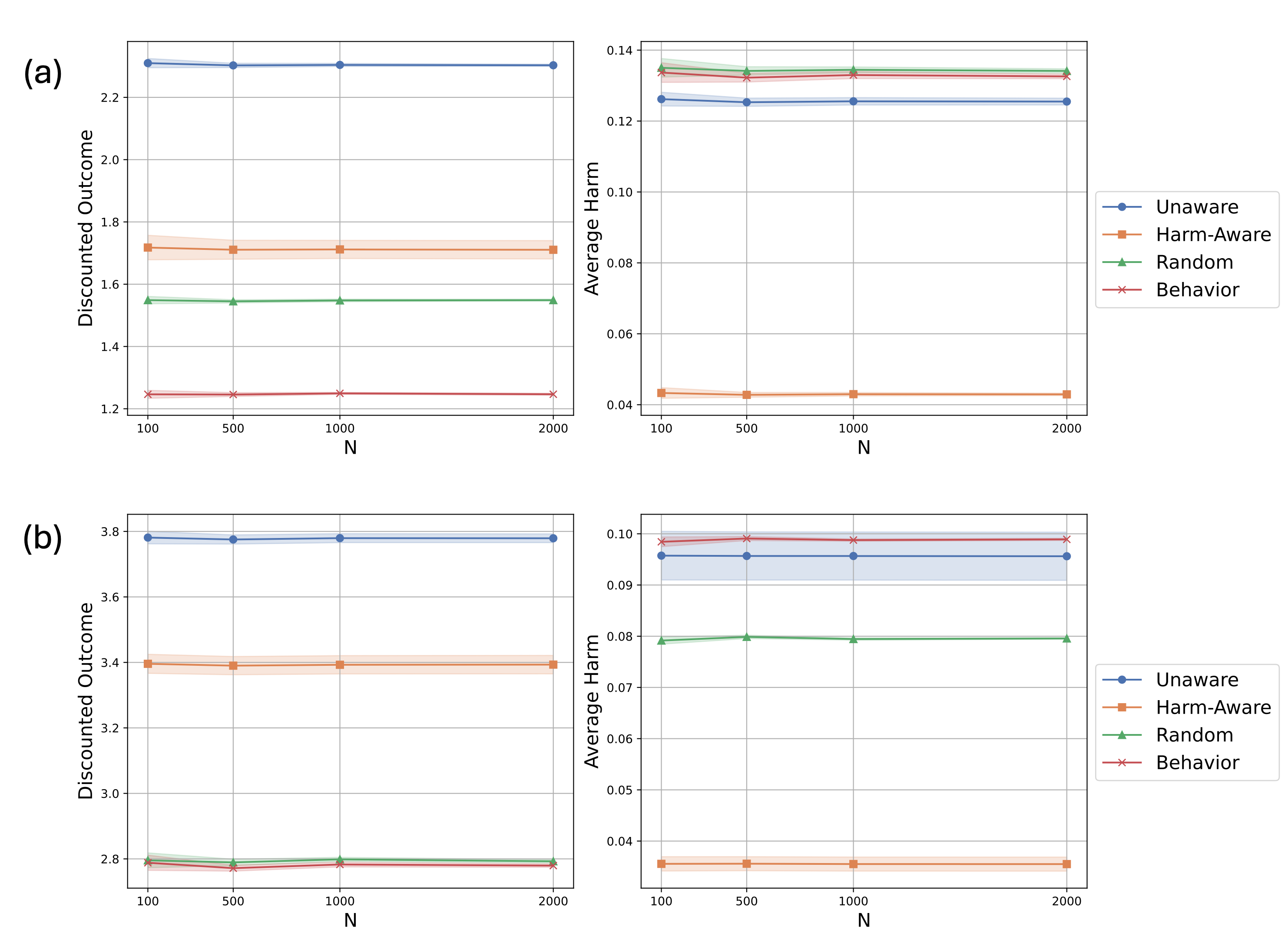}
    \caption{Comparisons of outcome and harm versus sample size $N$ under (a) linear setting with $\beta=0.5$ and (b) non-linear setting with $\beta=1.0$.}
    \label{fig:outcome_harm_N}
\end{figure}

Figure~\ref{fig:outcome_harm_N} show the results of the discounted outcome and average harm across different sample sizes.  
The proposed harm-aware policy consistently achieves the lowest average harm while maintaining the second-highest discounted outcome across all sample sizes. As $N$ increases, the shaded area representing the 95\% confidence interval 
become narrower, indicating improved policy learning stability. These results suggest that our method remains robust and reliable as the sample size grows, effectively balancing benefit and safety.


We also analyze the sensitivity of the evaluation metrics with respect to the risk-aversion factor $\beta$, which controls the trade-off between outcome and harm. Figure~\ref{fig:penalty} illustrates how the discounted outcome and average harm vary as $\beta$ changes. As $\beta$ increases, both the discounted outcome and the average harm decrease, indicating that a stronger risk-aversion penalty effectively suppresses harm, though it slightly reduces the outcome. These results provide insights into the influence of the risk-aversion factor on both benefit and safety.  Detailed numerical data for the figures are available in the Supplementary Material.

\begin{figure}[t]
    \centering
    \includegraphics[width=0.9\textwidth]{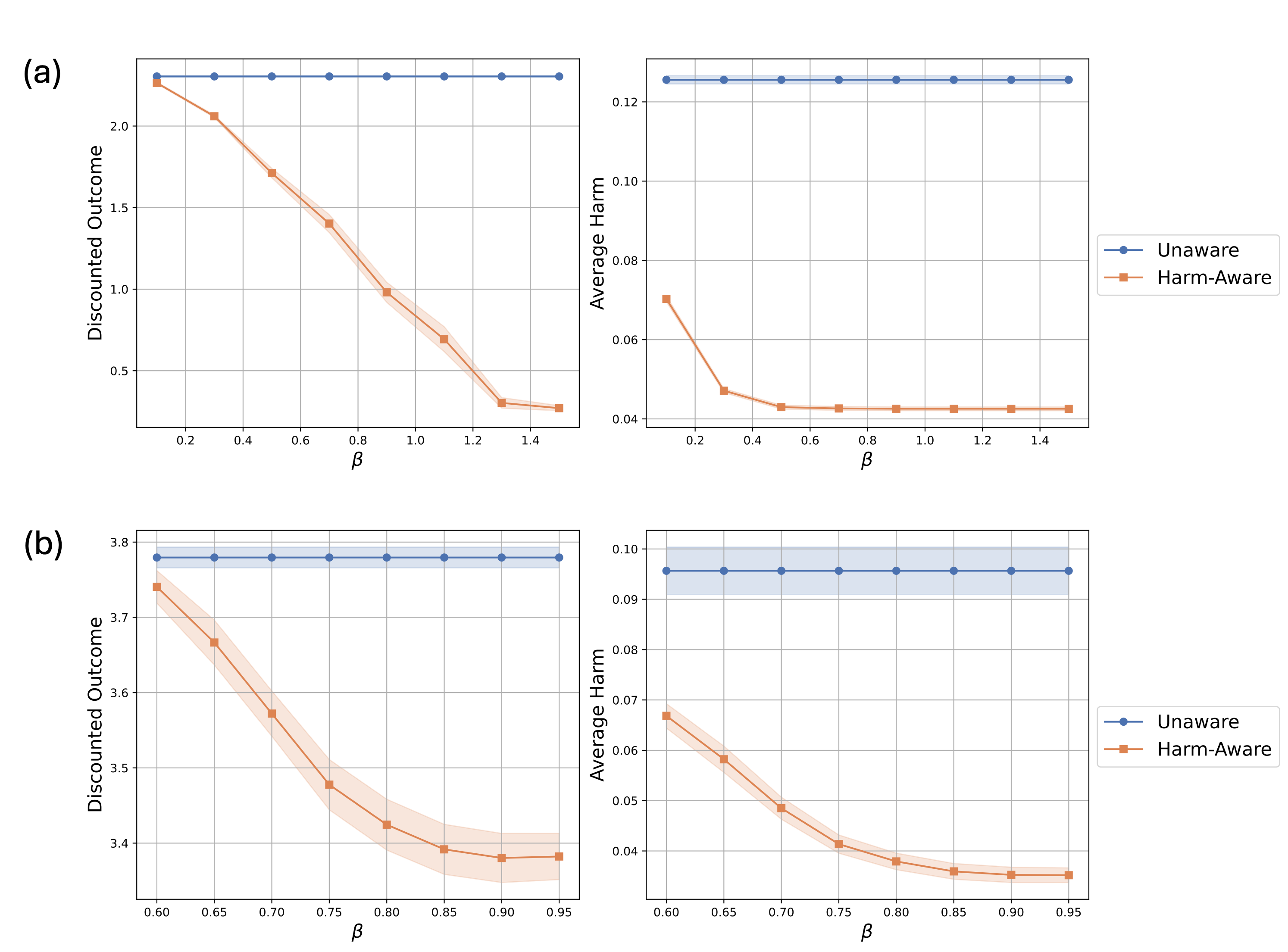}
    \caption{Comparison of outcome and harm across different values of risk-aversion factor $\beta$ under (a) the linear setting and (b) non-linear setting.}
\label{fig:penalty}
\end{figure}

\subsection{Application}

{\bf Background.}  
Human Immunodeficiency Virus (HIV) is a severe infectious disease that poses a major global health challenge. It continues to be one of the most pressing public health concerns worldwide, causing over one million deaths annually and necessitating sustained investment in both prevention and treatment efforts~\citep{gunthard2016antiretroviral}. The primary treatment strategies for HIV include the use of base drugs and non-nucleoside reverse transcriptase inhibitors (NNRTIs)~\citep{de2010non,bertagnolio2021clinical}. While most existing studies have primarily focused on assessing the clinical efficacy of different treatment regimens, they often do not account for the potential harms associated with these strategies.
In contrast, our study takes a novel perspective to analyze the same dataset. Specifically, we aim to: (1) evaluate the outcomes and harms associated with historical treatment policies (behavior policies), and (2) investigate whether improved treatment strategies can be developed that achieve higher clinical outcomes while simultaneously reducing potential harm.

{\bf Data.} We apply our proposed method to the HIV dataset in~\citet{kuo2022health}, using the outcome function from~\citet{parbhoo2017combining}. The dataset includes 8,916 synthetic patients, each with monthly measurements over a 60-month period, providing longitudinal time-series data suitable for evaluating dynamic treatment strategies. The action space is defined using two treatment components: the base drug combinations (Base Drug Combo) and complementary non-nucleoside reverse transcriptase inhibitors (Comp. NNRTI). To simplify the analysis, we map each component into two categories. For the Base Drug Combo variable, values ${0,1,2}$ are grouped as $Base = 0$, and values ${3,4,5}$ as $Base = 1$. Similarly, for the Comp. NNRTI variable, values ${0,1}$ are grouped as $Comp = 0$, and values ${2,3}$ as $Comp = 1$.
By taking the Cartesian product of these two binary components, we construct an action space consisting of four possible treatment actions, represented as $A = 2 \cdot Base + Comp$, with $A \in \{0,1,2,3\}$.  This mapping approach allows us to systematically capture the combinations of treatment options while maintaining a manageable action space.  In our analysis, we set the reference action as $a' = 0$ in equation~\eqref{eq:HR}, while all remaining patient characteristics are treated as state features to inform policy learning.

For real-world data, we only observe outcomes corresponding to the actions that were actually taken. As a result, counterfactual outcomes under an arbitrary policy cannot be directly computed. To address this limitation, we employ weighted importance sampling (WIS) estimators to estimate the expected outcome and harm under the learned policy. Specifically, let $\hat \pi(X_i)$ denote the learned policy and $\hat \pi_b(A_i \mid X_i)$ denote the estimated behavior policy. The WIS estimators of the expected outcome and harm under the learned policy are then given by
\begin{align*}
\frac{\sum_{t=0}^T\sum_{i=1}^N \I\{A_{i,t} = \hat \pi(X_{i,t})\}  Y_{i,t} / \hat \pi_b(A_{i,t}\mid X_{i,t})}
{\sum_{t=0}^T\sum_{i=1}^N \I\{A_{i,t} = \hat\pi(X_{i,t})\} / \hat \pi_b(A_{i,t}\mid X_{i,t})}
\end{align*}
and
\begin{align*}
\frac{\sum_{t=0}^T\sum_{i=1}^N \I\{A_{i,t} = \hat \pi(X_{i,t})\} 
\widehat{\HR}_{i,t}(X_{i,t}; \hat \pi(X_{i,t}), a') / \hat \pi_b(A_{i,t} \mid X_{i,t})}
{\sum_{t=0}^T\sum_{i=1}^N \I\{A_{i,t} = \hat \pi(X_{i,t})\} / \hat \pi_b(A_{i,t}\mid X_{i,t})}.
\end{align*}
These two quantities are used to evaluate the performance of the learned policy. 

\begin{table}[t]
\caption{Estimated expected outcome and harm rate on the HIV dataset under different penalty coefficient $\beta$ and copula parameter $\rho$.}
\label{tab:penalty_rho}
\centering
\resizebox{\textwidth}{!}{
\setlength{\tabcolsep}{5pt}
\begin{tabular}{cccccccccc} 
\toprule
\multirow{2.5}{*}{\textbf{Setting}} & \multirow{2.5}{*}{\textbf{Penalty}} & \multicolumn{4}{c}{\textbf{Outcome}} & \multicolumn{4}{c}{\textbf{Harm}} \\
\cmidrule(lr){3-6} \cmidrule(lr){7-10}
 & & Unaware & Harm-aware & Random & Behavior & Unaware & Harm-aware & Random & Behavior \\
\midrule
\multirow{3}{*}{$\boldsymbol{\rho=0.0}$} 
 & 3.0 & 6.063 & 5.324 & 3.360 & 3.345 & 0.268 & \textbf{0.122*} & 0.339 & 0.383 \\
 & 5.0 & 6.063 & 5.112 & 3.360 & 3.345 & 0.268 & \textbf{0.081*} & 0.339 & 0.383 \\
 & 7.0 & 6.063 & 5.095 & 3.360 & 3.345 & 0.268 & \textbf{0.065*} & 0.339 & 0.383 \\
\midrule
\multirow{3}{*}{$\boldsymbol{\rho=0.5}$} 
 & 3.0 & 6.063 & 5.317 & 3.360 & 3.345 & 0.268 & \textbf{0.126*} & 0.339 & 0.383 \\
 & 5.0 & 6.063 & 5.085 & 3.360 & 3.345 & 0.268 & \textbf{0.109*} & 0.339 & 0.383 \\
 & 7.0 & 6.063 & 4.901 & 3.360 & 3.345 & 0.268 & \textbf{0.065*} & 0.339 & 0.383 \\
\midrule
\multirow{3}{*}{$\boldsymbol{\rho=1.0}$} 
 & 3.0 & 6.063 & 5.701 & 3.360 & 3.345 & 0.268 & \textbf{0.130*} & 0.339 & 0.383 \\
 & 5.0 & 6.063 & 5.246 & 3.360 & 3.345 & 0.268 & \textbf{0.113*} & 0.339 & 0.383 \\
 & 7.0 & 6.063 & 5.066 & 3.360 & 3.345 & 0.268 & \textbf{0.077*} & 0.339 & 0.383 \\
\bottomrule
\end{tabular}
}
\end{table}

{\bf Results.} 
We present the numerical results for various policy learning methods, evaluated under different values of the correlation parameter $\rho$ and the risk-aversion factor $\beta$ as part of a sensitivity analysis. The corresponding results are summarized in Table \ref{tab:penalty_rho}. From these results, it is evident that the harm-unaware policy achieves the highest discounted outcome but incurs a substantially higher harm rate, indicating that focusing solely on efficacy can entail considerable safety risks. In contrast, our harm-aware method substantially reduces the harm by approximately 70\% compared to the harm-unaware policy when the risk-aversion factor $\beta$ exceeds 3, 
while maintaining a comparable level of overall outcome.
Meanwhile, the historical treatment policies (Behavior) result in both lower outcomes and higher harm, suggesting that these historical strategies could be significantly improved in terms of both outcome enhancement and harm reduction.
These findings illustrate that our approach is capable of generating treatment strategies that are markedly safer without compromising primary therapeutic effectiveness, highlighting its practical value for designing risk-sensitive policies.

\section{Discussion}  \label{sec-conclusion}
In this article, we address the critical challenge of ensuring individual safety in offline reinforcement learning, where efforts to maximize aggregate returns often obscure risks to certain individuals. We propose a novel framework that formalizes counterfactual harm and develop a general two-stage algorithm for learning safety-aware policies. A key strength of our approach is its generality and modularity: the proposed harm estimation and penalty mechanisms are model-agnostic, allowing seamless integration into a wide range of existing offline RL algorithms, including FQI, CQL~\citep{kumar2020conservative,yu2021combo}, and BCQ~\citep{fujimoto2019off}, without requiring fundamental modifications to the underlying optimization procedure. This flexibility makes our method particularly well-suited for high-stakes application domains such as precision medicine, autonomous driving, and public policy, where ensuring individual-level reliability is as critical as optimizing average outcome.

There are several potential avenues for future research.
First, the identification of treatment effects in our framework depends critically on the no-unmeasured confounders assumption, which cannot be verified using only observed data. Future work could incorporate sensitivity analysis frameworks to evaluate how robust the learned policies are to potential violations of this assumption. In particular, developing methods to bound the maximum harm rate in the presence of unmeasured confounding would provide stronger safety guarantees and increase the practical reliability of harm-aware algorithms. 
Second, while our current framework focuses on a scalar outcome variable $Y$, real-world decision-making often involves multiple, potentially conflicting objectives, such as maximizing efficacy, minimizing financial cost, and ensuring psychological well-being. Extending the proposed framework to settings with multiple outcome variables, where $Y$ is represented as a random vector, would allow for a more nuanced and comprehensive definition of harm. Such an extension would require characterizing the trade-offs among different dimensions of safety and developing multi-objective harm-aware policy learning algorithms that can balance these competing objectives effectively.  

\section{Competing interests}
No competing interest is declared.





\bibliographystyle{plainnat}
\bibliography{reference}

@article{li2024settling,
    author = {Gen Li and Laixi Shi and Yuxin Chen and Yuejie Chi and Yuting Wei},
    title = {{Settling the sample complexity of model-based offline reinforcement learning}},
    volume = {52},
    journal = {The Annals of Statistics},
    number = {1},
    publisher = {Institute of Mathematical Statistics},
    pages = {233 -- 260},
    year = {2024}
}

@article{zhou2024estimating,
    author = {Wenzhuo Zhou and Ruoqing Zhu and Annie Qu},
    title = {Estimating Optimal Infinite Horizon Dynamic Treatment Regimes via pT-Learning},
    journal = {Journal of the American Statistical Association},
    volume = {119},
    number = {545},
    pages = {625--638},
    year = {2024},
    publisher = {Taylor \& Francis},
    doi = {10.1080/01621459.2022.2138760}
}

@article{shi2024value,
    author = {Chengchun Shi and Zhengling Qi and Jianing Wang and Fan Zhou},
    title = {Value Enhancement of Reinforcement Learning via Efficient and Robust Trust Region Optimization},
    journal = {Journal of the American Statistical Association},
    volume = {119},
    number = {547},
    pages = {2011--2025},
    year = {2024}
}

@article{vaskov2024no,
  title={Do no harm: A counterfactual approach to safe reinforcement learning},
  author={Vaskov, Sean and Schwarting, Wilko and Baker, Chris},
  booktitle={6th Annual Learning for Dynamics \& Control Conference},
  pages={1675--1687},
  year={2024},
  organization={PMLR}
}

@article{chen2024learning,
  title={On learning and testing of counterfactual fairness through data preprocessing},
  author={Chen, Haoyu and Lu, Wenbin and Song, Rui and Ghosh, Pulak},
  journal={Journal of the American Statistical Association},
  volume={119},
  number={546},
  pages={1286--1296},
  year={2024},
  publisher={Taylor \& Francis}
}

@article{mo2021learning,
  title={Learning optimal distributionally robust individualized treatment rules},
  author={Mo, Weibin and Qi, Zhengling and Liu, Yufeng},
  journal={Journal of the American Statistical Association},
  volume={116},
  number={534},
  pages={659--674},
  year={2021},
  publisher={Taylor \& Francis}
}

@article{Zhang-Yang2025,
	author = {Yichi Zhang and Shu Yang},
	journal = {Journal of the Royal Statistical Society: Series B (Statistical Methodology)},
	pages = {1655–-1677},
	title = {Semiparametric localized principal stratification analysis with continuous strata},
	volume = {87},
	year = {2025}}

@article{Lu-etal2025,
    author = {Lu, Sizhu and Jiang, Zhichao and Ding, Peng},
    title = {Principal stratification with continuous post-treatment variables: nonparametric identification and semiparametric estimation},
    journal = {Journal of the Royal Statistical Society Series B: Statistical Methodology},
    pages = {qkaf049},
    year = {2025}}

@book{sutton1998reinforcement,
  title={Reinforcement learning: An introduction},
  author={Sutton, Richard S and Barto, Andrew G and others},
  volume={1},
  number={1},
  year={1998},
  publisher={MIT press Cambridge}
}

@article{li2017deep,
  title={Deep reinforcement learning: An overview},
  author={Li, Yuxi},
  journal={arXiv preprint arXiv:1701.07274},
  year={2017}
}

@article{kober2013reinforcement,
  title={Reinforcement learning in robotics: A survey},
  author={Kober, Jens and Bagnell, J Andrew and Peters, Jan},
  journal={The International Journal of Robotics Research},
  volume={32},
  number={11},
  pages={1238--1274},
  year={2013},
  publisher={SAGE Publications Sage UK: London, England}
}

@article{silver2016mastering,
  title={Mastering the game of Go with deep neural networks and tree search},
  author={Silver, David and Huang, Aja and Maddison, Chris J and Guez, Arthur and Sifre, Laurent and Van Den Driessche, George and Schrittwieser, Julian and Antonoglou, Ioannis and Panneershelvam, Veda and Lanctot, Marc and others},
  journal={nature},
  volume={529},
  number={7587},
  pages={484--489},
  year={2016},
  publisher={Nature Publishing Group}
}

@inproceedings{liu2021finrl,
  title={FinRL: Deep reinforcement learning framework to automate trading in quantitative finance},
  author={Liu, Xiao-Yang and Yang, Hongyang and Gao, Jiechao and Wang, Christina Dan},
  booktitle={Proceedings of the second ACM international conference on AI in finance},
  pages={1--9},
  year={2021}
}

@article{parbhoo2017combining,
  title={Combining kernel and model based learning for HIV therapy selection},
  author={Parbhoo, Sonali and Bogojeska, Jasmina and Zazzi, Maurizio and Roth, Volker and Doshi-Velez, Finale},
  journal={AMIA Summits on Translational Science Proceedings},
  volume={2017},
  pages={239},
  year={2017}
}

@article{kuo2022health,
  title={The Health Gym: synthetic health-related datasets for the development of reinforcement learning algorithms},
  author={Kuo, Nicholas I-Hsien and Polizzotto, Mark N and Finfer, Simon and Garcia, Federico and S{\"o}nnerborg, Anders and Zazzi, Maurizio and B{\"o}hm, Michael and Kaiser, Rolf and Jorm, Louisa and Barbieri, Sebastiano},
  journal={Scientific data},
  volume={9},
  number={1},
  pages={693},
  year={2022},
  publisher={Nature Publishing Group UK London}
}

@article{wu2024quantifying,
  title={Quantifying Individual Risk for Binary Outcome: Bounds and Inference},
  author={Wu, Peng and Ding, Peng and Geng, Zhi and Liu, Yue},
  journal={arXiv preprint arXiv:2402.10537},
  year={2024}
}

@inproceedings{chen2019information,
  title={Information-theoretic considerations in batch reinforcement learning},
  author={Chen, Jinglin and Jiang, Nan},
  booktitle={International conference on machine learning},
  pages={1042--1051},
  year={2019},
  organization={PMLR}
}

@article{Chernozhukov2005,
author = {Victor Chernozhukov and Christian Hansen},
title = {An {I}{V} Model of Quantile Treatment Effects},
journal = {Econometrica},
volume = {73},
pages = {245-261},
year = {2005}}

@article{Heckman1997,
author = {James J. Heckman and Jeffrey Smith and Nancy Clements},
title = {Making the Most Out of Programme Evaluations and Social Experiments: Accounting
for Heterogeneity in Programme Impacts},
journal = {The Review of Economic Studies},
volume = {64},
pages = {487-535},
year = {1997}}

@article{wang2025counterfactually,
  title={Counterfactually Fair Reinforcement Learning via Sequential Data Preprocessing},
  author={Wang, Jitao and Shi, Chengchun and Piette, John D and Loftus, Joshua R and Zeng, Donglin and Wu, Zhenke},
  journal={arXiv preprint arXiv:2501.06366},
  year={2025}
}

@article{tropp2012user,
  title={User-friendly tail bounds for sums of random matrices},
  author={Tropp, Joel A},
  journal={Foundations of computational mathematics},
  volume={12},
  pages={389--434},
  year={2012},
  publisher={Springer}
}

@article{wang2020statistical,
  title={What are the statistical limits of offline RL with linear function approximation?},
  author={Wang, Ruosong and Foster, Dean P and Kakade, Sham M},
  journal={arXiv preprint arXiv:2010.11895},
  year={2020}
}

@inproceedings{munos2005error,
  title={Error bounds for approximate value iteration},
  author={Munos, R{\'e}mi},
  booktitle={Proceedings of the National Conference on Artificial Intelligence},
  volume={20},
  number={2},
  pages={1006},
  year={2005},
  organization={Menlo Park, CA; Cambridge, MA; London; AAAI Press; MIT Press; 1999}
}

@article{uehara2022review,
  title={A review of off-policy evaluation in reinforcement learning},
  author={Uehara, Masatoshi and Shi, Chengchun and Kallus, Nathan},
  journal={arXiv preprint arXiv:2212.06355},
  year={2022}
}

@article{bertagnolio2021clinical,
  title={Clinical impact of pretreatment human immunodeficiency virus drug resistance in people initiating nonnucleoside reverse transcriptase inhibitor--containing antiretroviral therapy: a systematic review and meta-analysis},
  author={Bertagnolio, Silvia and Hermans, Lucas and Jordan, Michael R and Avila-Rios, Santiago and Iwuji, Collins and Derache, Anne and Delaporte, Eric and Wensing, Annemarie and Aves, Theresa and Borhan, ASM and others},
  journal={The Journal of infectious diseases},
  volume={224},
  number={3},
  pages={377--388},
  year={2021},
  publisher={Oxford University Press US}
}

@article{de2010non,
  title={Non-nucleoside reverse transcriptase inhibitors (NNRTIs), their discovery, development, and use in the treatment of HIV-1 infection: a review of the last 20 years (1989--2009)},
  author={de B{\'e}thune, Marie-Pierre},
  journal={Antiviral research},
  volume={85},
  number={1},
  pages={75--90},
  year={2010},
  publisher={Elsevier}
}

@article{gunthard2016antiretroviral,
  title={Antiretroviral drugs for treatment and prevention of HIV infection in adults: 2016 recommendations of the International Antiviral Society--USA panel},
  author={G{\"u}nthard, Huldrych F and Saag, Michael S and Benson, Constance A and Del Rio, Carlos and Eron, Joseph J and Gallant, Joel E and Hoy, Jennifer F and Mugavero, Michael J and Sax, Paul E and Thompson, Melanie A and others},
  journal={Jama},
  volume={316},
  number={2},
  pages={191--210},
  year={2016},
  publisher={American Medical Association}
}

@article{antos2008learning,
  title={Learning near-optimal policies with Bellman-residual minimization based fitted policy iteration and a single sample path},
  author={Antos, Andr{\'a}s and Szepesv{\'a}ri, Csaba and Munos, R{\'e}mi},
  journal={Machine Learning},
  volume={71},
  number={1},
  pages={89--129},
  year={2008},
  publisher={Springer}
}

@article{agarwal2019reinforcement,
  title={Reinforcement learning: Theory and algorithms},
  author={Agarwal, Alekh and Jiang, Nan and Kakade, Sham M and Sun, Wen},
  journal={CS Dept., UW Seattle, Seattle, WA, USA, Tech. Rep},
  volume={32},
  pages={96},
  year={2019}
}

@article{zhong2025risk,
  title={Risk-sensitive deep rl: Variance-constrained actor-critic provably finds globally optimal policy},
  author={Zhong, Han and Deng, Xun and Fang, Ethan X and Yang, Zhuoran and Wang, Zhaoran and Li, Runze},
  journal={Journal of the American Statistical Association},
  number={just-accepted},
  pages={1--26},
  year={2025},
  publisher={Taylor \& Francis}
}

@article{wang2018quantile,
  title={Quantile-optimal treatment regimes},
  author={Wang, Lan and Zhou, Yu and Song, Rui and Sherwood, Ben},
  journal={Journal of the American Statistical Association},
  volume={113},
  number={523},
  pages={1243--1254},
  year={2018},
  publisher={Taylor \& Francis}
}

@article{viviano2024fair,
  title={Fair policy targeting},
  author={Viviano, Davide and Bradic, Jelena},
  journal={Journal of the American Statistical Association},
  volume={119},
  number={545},
  pages={730--743},
  year={2024},
  publisher={Taylor \& Francis}
}

@article{liu2025fairness,
  title={Fairness-aware contextual dynamic pricing with strategic buyers},
  author={Liu, Pangpang and Sun, Will Wei},
  journal={arXiv preprint arXiv:2501.15338},
  year={2025}
}

@article{fang2023fairness,
  title={Fairness-oriented learning for optimal individualized treatment rules},
  author={Fang, Ethan X and Wang, Zhaoran and Wang, Lan},
  journal={Journal of the American Statistical Association},
  volume={118},
  number={543},
  pages={1733--1746},
  year={2023},
  publisher={Taylor \& Francis}
}

@article{ouyang2022training,
  title={Training language models to follow instructions with human feedback},
  author={Ouyang, Long and Wu, Jeffrey and Jiang, Xu and Almeida, Diogo and Wainwright, Carroll and Mishkin, Pamela and Zhang, Chong and Agarwal, Sandhini and Slama, Katarina and Ray, Alex and others},
  journal={Advances in neural information processing systems},
  volume={35},
  pages={27730--27744},
  year={2022}
}

@article{singh1994upper,
  title={An upper bound on the loss from approximate optimal-value functions},
  author={Singh, Satinder P and Yee, Richard C},
  journal={Machine Learning},
  volume={16},
  number={3},
  pages={227--233},
  year={1994},
  publisher={Springer}
}

@article{chakraborty2013statistical,
  title={Statistical methods for dynamic treatment regimes},
  author={Chakraborty, Bibhas and Moodie, Erica E},
  journal={Springer-Verlag. doi},
  volume={10},
  number={978-1},
  pages={4--1},
  year={2013},
  publisher={Springer}
}

@book{puterman2014markov,
  title={Markov decision processes: discrete stochastic dynamic programming},
  author={Puterman, Martin L},
  year={2014},
  publisher={John Wiley \& Sons}
}

@article{panaganti2022robust,
  title={Robust reinforcement learning using offline data},
  author={Panaganti, Kishan and Xu, Zaiyan and Kalathil, Dileep and Ghavamzadeh, Mohammad},
  journal={Advances in neural information processing systems},
  volume={35},
  pages={32211--32224},
  year={2022}
}

@article{morris2016punishment,
  title={The punishment gap: School suspension and racial disparities in achievement},
  author={Morris, Edward W and Perry, Brea L},
  journal={Social Problems},
  volume={63},
  number={1},
  pages={68--86},
  year={2016},
  publisher={Oxford University Press}
}

@article{mowen2016school,
  title={School discipline as a turning point: The cumulative effect of suspension on arrest},
  author={Mowen, Thomas and Brent, John},
  journal={Journal of research in crime and delinquency},
  volume={53},
  number={5},
  pages={628--653},
  year={2016},
  publisher={Sage Publications Sage CA: Los Angeles, CA}
}

@article{wilt2017follow,
  title={Follow-up of prostatectomy versus observation for early prostate cancer},
  author={Wilt, Timothy J and Jones, Karen M and Barry, Michael J and Andriole, Gerald L and Culkin, Daniel and Wheeler, Thomas and Aronson, William J and Brawer, Michael K},
  journal={New England Journal of Medicine},
  volume={377},
  number={2},
  pages={132--142},
  year={2017},
  publisher={Mass Medical Soc}
}

@article{hu2025fast,
  title={Fast rates for the regret of offline reinforcement learning},
  author={Hu, Yichun and Kallus, Nathan and Uehara, Masatoshi},
  journal={Mathematics of Operations Research},
  volume={50},
  number={1},
  pages={633--655},
  year={2025},
  publisher={INFORMS}
}

@inproceedings{kumar2020conservative,
  title     = {Conservative Q-Learning for Offline Reinforcement Learning},
  author    = {Kumar, Aviral and Zhou, Aurick and Tucker, George and Levine, Sergey},
  booktitle = {Advances in Neural Information Processing Systems},
  volume    = {33},
  pages     = {1179--1191},
  year      = {2020}
}

@inproceedings{yu2021combo,
  title={COMBO: Conservative Offline Model-Based Policy Optimization},
  author={Yu, Tianhe and Kumar, Aviral and Chebotar, Yevgen and Hausman, Karol and Levine, Sergey and Finn, Chelsea},
  booktitle={Advances in Neural Information Processing Systems},
  volume={34},
  pages={28954--28967},
  year={2021}
}

@inproceedings{fujimoto2019off,
  title     = {Off-Policy Deep Reinforcement Learning without Exploration},
  author    = {Fujimoto, Scott and Meger, David and Precup, Doina},
  booktitle = {International Conference on Machine Learning},
  pages     = {2052--2062},
  year      = {2019},
  organization={PMLR}
}

@book{morath2005no,
  title={To do no harm: ensuring patient safety in health care organizations},
  author={Morath, Julianne M and Turnbull, Joanne E},
  year={2005},
  publisher={John Wiley \& Sons}
}

@article{Sun-etal2024,
	Author = {Shuo Sun and Johanna G. Nešlehová and Erica E. M. Moodie},
	Journal = {Statistics in medicine},
	Pages = {34--38},
	Publisher = {Wiley Online Library},
	Title = {Principal stratification for quantile causal effects under partial compliance},
	Volume = {43},
	Year = {2024}}

@article{ertefaie2018constructing,
  title={Constructing dynamic treatment regimes over indefinite time horizons},
  author={Ertefaie, Ashkan and Strawderman, Robert L},
  journal={Biometrika},
  volume={105},
  number={4},
  pages={963--977},
  year={2018},
  publisher={Oxford University Press}
}

@article{liao2022batch,
  title={Batch policy learning in average reward markov decision processes},
  author={Liao, Peng and Qi, Zhengling and Wan, Runzhe and Klasnja, Predrag and Murphy, Susan A},
  journal={Annals of statistics},
  volume={50},
  number={6},
  pages={3364},
  year={2022},
  publisher={NIH Public Access}
}

@article{shi2024statistically,
  title={Statistically efficient advantage learning for offline reinforcement learning in infinite horizons},
  author={Shi, Chengchun and Luo, Shikai and Le, Yuan and Zhu, Hongtu and Song, Rui},
  journal={Journal of the American Statistical Association},
  volume={119},
  number={545},
  pages={232--245},
  year={2024},
  publisher={Taylor \& Francis}
}

@article{luckett2020estimating,
  title={Estimating Dynamic Treatment Regimes in Mobile Health Using V-learning},
  author={Luckett, Daniel J and Laber, Eric B and Kahkoska, Anna R and Maahs, David M and Mayer-Davis, Elizabeth and Kosorok, Michael R},
  journal={Journal of the American Statistical Association},
  volume={115},
  number={530},
  pages={692},
  year={2020},
  publisher={NIH Public Access}
}

@article{kallus2022efficiently,
  title={Efficiently breaking the curse of horizon in off-policy evaluation with double reinforcement learning},
  author={Kallus, Nathan and Uehara, Masatoshi},
  journal={Operations Research},
  volume={70},
  number={6},
  pages={3282--3302},
  year={2022},
  publisher={INFORMS}
}

@inproceedings{joseph2016fairness,
  title={Fairness in learning: Classic and contextual bandits},
  author={Joseph, Matthew and Kearns, Michael and Morgenstern, Jamie and Roth, Aaron},
  booktitle={Advances in Neural Information Processing Systems},
  pages={325--333},
  year={2016}
}

@article{yang2023riskaverse,
  title={Risk-averse reinforcement learning via policy gradient},
  author={Yang, Lin and Xu, Huan and Zhang, Jia and Zhou, Zhi},
  journal={Journal of Machine Learning Research},
  volume={24},
  number={121},
  pages={1--39},
  year={2023}
}

@inproceedings{derman2018soft,
  title={Soft-robust actor-critic policy-gradient},
  author={Derman, Esther and Mannor, Shie},
  booktitle={Advances in Neural Information Processing Systems},
  pages={8307--8317},
  year={2018}
}

@misc{wang2020robust,
      title={Robust Reinforcement Learning with Wasserstein Constraint}, 
      author={Linfang Hou and Liang Pang and Xin Hong and Yanyan Lan and Zhiming Ma and Dawei Yin},
      year={2020},
      eprint={2006.00945},
      archivePrefix={arXiv preprint arXiv:2006.00945}, 
}

@inproceedings{pinto2017robust,
  title={Robust adversarial reinforcement learning},
  author={Pinto, Lerrel and Davidson, James and Sukthankar, Rahul and Gupta, Abhinav},
  booktitle={International Conference on Machine Learning},
  pages={2817--2826},
  year={2017}
}

@article{farahmand2011action,
  title={Action-gap phenomenon in reinforcement learning},
  author={Farahmand, Amir-massoud and Szepesv{\'a}ri, Csaba and Munos, R{\'e}mi},
  journal={Advances in Neural Information Processing Systems},
  volume={24},
  year={2011}
}

@article{tsybakov2004optimal,
  title={Optimal aggregation of classifiers in statistical learning},
  author={Tsybakov, Alexander B},
  journal={The Annals of Statistics},
  volume={32},
  number={1},
  pages={135--166},
  year={2004},
  publisher={Institute of Mathematical Statistics}
}

@article{audibert2007fast,
   title={Fast learning rates for plug-in classifiers},
   volume={35},
   ISSN={0090-5364},
   url={http://dx.doi.org/10.1214/009053606000001217},
   DOI={10.1214/009053606000001217},
   number={2},
   journal={The Annals of Statistics},
   publisher={Institute of Mathematical Statistics},
   author={Audibert, Jean-Yves and Tsybakov, Alexandre B.},
   year={2007}
}

@article{audibert2007tuning,
  title={Tuning bandit algorithms in stochastic environments},
  author={Audibert, Jean-Yves and Munos, R{\'e}mi and Szepesv{\'a}ri, Csaba},
  journal={Theoretical Computer Science},
  volume={410},
  number={19},
  pages={1876--1902},
  year={2009},
  publisher={Elsevier}
}

@inproceedings{kakade2002approximately,
  title={Approximately optimal approximate reinforcement learning},
  author={Kakade, Sham and Langford, John},
  booktitle={Proceedings of the nineteenth international conference on machine learning},
  pages={267--274},
  year={2002}
}

@inproceedings{zhang2018mitigating,
  title={Mitigating unwanted biases with adversarial learning},
  author={Zhang, Brian Hu and Lemoine, Bethany and Mitchell, Margaret},
  booktitle={Proceedings of the 2018 AAAI/ACM Conference on AI, Ethics, and Society},
  pages={335--340},
  year={2018}
}

@article{Jonsen1978,
 author = {Albert R. Jonsen},
 journal = {Annals of Internal Medicine},
 number = {6},
 pages = {827--832},
 publisher = {The MIT Press},
 title = {Do No Harm},
 volume = {88},
 year = {1978}
}

@article{Jin-etal2023,
	author = {Ying Jin and Zhimei Ren and Emmanuel J. Cand{\`e}s},
	journal = {Proceedings of the National Academy of Sciences},
	pages = {e2214889120},
	title = {Sensitivity analysis of individual treatment effects: A robust conformal inference approach},
	volume = {120},
	year = {2023}}

@inproceedings{Kallus2022,
author = {Kallus, Nathan},
title = {What's the harm? sharp bounds on the fraction negatively affected by treatment},
year = {2022},
booktitle = {Proceedings of the 36th International Conference on Neural Information Processing Systems},
articleno = {1164},
numpages = {14},
  pages={15996--16009},
}

@article{MuellerPearl2023-CausalInference,
	author = {Scott Mueller and Judea Pearl},
	journal = {Journal of Causal Inference},
	pages = {20220050},
	title = {Personalized Decision Making -- A Conceptual Introduction},
	volume = {11},
	year = {2023}}

@article{Wu-Mao2025,
  title={The Promises of Multiple Experiments: Identifying Joint Distribution of Potential Outcomes},
  author={Peng Wu and Xiaojie Mao},
  journal={arXiv preprint arXiv:2504.20470},
  year={2025}
}

@article{Wiens-etal2019,
 author = {Jenna Wiens and Suchi Saria and Mark Sendak and Marzyeh Ghassemi and Vincent X. Liu and Finale Doshi-Velez and Kenneth Jung and Katherine Heller and David Kale and Mohammed Saeed and  Pilar N. Ossorio and Sonoo Thadaney-Israni and Anna Goldenberg},
 journal = {Nature Medicine}, 
 title = {Do no harm: a roadmap for responsible machine learning for health care},
 volume = {25},
 number = {10}, 
 pages = {1337--1340},
 year = {2019}
}

@inproceedings{chow2018lyapunov,
  title={Lyapunov-based safe policy optimization for continuous control},
  author={Chow, Yinlam and Nachum, Ofir and Duenez-Guzman, Edgar and Ghavamzadeh, Mohammad},
  booktitle={Advances in Neural Information Processing Systems},
  pages={8103--8112},
  year={2018}
}

@article{tang2020worst,
  title={Worst-case aware policy optimization for robust reinforcement learning},
  author={Tang, Ziyu and Zhang, Qinqing and Sun, Wen},
  journal={arXiv preprint arXiv:2002.08033},
  year={2020}
}

@article{thomas2019preventing,
author = {Philip S. Thomas  and Bruno Castro da Silva  and Andrew G. Barto  and Stephen Giguere  and Yuriy Brun  and Emma Brunskill },
title = {Preventing undesirable behavior of intelligent machines},
journal = {Science},
volume = {366},
number = {6468},
pages = {999-1004},
year = {2019}}

@article{jiang2022constrained,
  title={Constrained policy optimization via approximate Lagrangian methods},
  author={Jiang, Sheng and Wang, Kaiqing and Yang, Zhuoran and Wang, Mengdi},
  journal={SIAM Journal on Optimization},
  volume={32},
  number={1},
  pages={147--177},
  year={2022}
}

@article{shi2023risk,
  title={Risk-sensitive offline reinforcement learning via uncertainty-regularized policy optimization},
  author={Shi, Xiaojie and Yang, Lin and Zhou, Zhi},
  journal={Machine Learning},
  volume={112},
  number={4},
  pages={1389--1421},
  year={2023},
  publisher={Springer}
}

@article{Lei-Candes2021,
	author = {Lihua Lei and Emmanuel J. Cand{\`e}s},
	journal = {Journal of the Royal Statistical Society: Series B (Statistical Methodology)},
	pages = {911-938},
	title = {Conformal Inference of Counterfactuals and Individual Treatment Effects},
	volume = {83},
	year = {2021}}

@book{Hernan-Robins2020,
	author = {M.A. Hern{\'a}n and J. M. Robins},
	publisher = {Boca Raton: Chapman and Hall/CRC},
	title = {Causal Inference: What If},
	year = {2020}}

@article{Bartolucci01062011,
author = {Francesco Bartolucci and Leonardo Grilli},
title = {Modeling Partial Compliance Through Copulas in a Principal Stratification Framework},
journal = {Journal of the American Statistical Association},
volume = {106},
number = {494},
pages = {469--479},
year = {2011}
}

@article{kusner2017counterfactual,
	author = {Kusner, Matt J and Loftus, Joshua and Russell, Chris and Silva, Ricardo},
	journal = {Advances in neural information processing systems},
	title = {Counterfactual fairness},
	volume = {30},
	year = {2017}}

@book{Jaworski-etal-compula2010,
	author = {P. Jaworski and F. Durante and W. K. Hardle and T. Rychlik},
	publisher = {Springer},
	title = { Copula Theory and Its Applications},
	volume = {198},
	year = {2010}}

@book{pearl2009causality,
	author = {Pearl, Judea},
	publisher = {Cambridge university press},
	title = {Causality},
	year = {2009}}

@article{Stone1982,
author = {Charles J. Stone},
title = {Optimal Global Rates of Convergence for Nonparametric Regression},
journal = {The Annals of Statistics},
volume = {10},
number = {4},
pages = {1040--1053},
year = {1982}
}

@book{Fan1996,
  title={Local Polynomial Modelling and Its Applications},
  author={Fan, Jianqing and Gijbels, Ir{\`e}ne},
  year={1996},
  publisher={Chapman and Hall/CRC},
  address={London}
}

@book{Tsybakov2009,
  title={Introduction to Nonparametric Estimation},
  author={Tsybakov, Alexandre B.},
  year={2009},
  publisher={Springer Science \& Business Media},
  address={New York}
}

@book{Gyorfi2002,
  title={A Distribution-Free Theory of Nonparametric Regression},
  author={Gy{\"o}rfi, L{\'a}szl{\'o} and Kohler, Michael and Krzyzak, Adam and Walk, Harro},
  year={2002},
  publisher={Springer Science \& Business Media},
  address={New York}
}

@article{gottesman2019guidelines,
  title={Guidelines for reinforcement learning in healthcare},
  author={Gottesman, Omer and Johansson, Fredrik and Komorowski, Matthieu and Faisal, Aldo and Sontag, David and Doshi-Velez, Finale and Celi, Leo Anthony},
  journal={Nature medicine},
  volume={25},
  number={1},
  pages={16--23},
  year={2019},
  publisher={Nature Publishing Group}
}

@article{shortreed2011informing,
  title={Informing sequential clinical decision-making through reinforcement learning: an application to dyslipidemia},
  author={Shortreed, Susan M and Laber, Eric and Lizotte, Daniel J and Stroup, T Scott and Pineau, Joelle and Murphy, Susan A},
  journal={Statistical Methods in Medical Research},
  volume={20},
  number={6},
  pages={631--659},
  year={2011},
  publisher={SAGE Publications Sage UK: London, England}
}

 \newpage

  \begin{center}
\bf \Large 
Supplementary Material
\end{center}

\setcounter{equation}{0}
\setcounter{section}{0}
\setcounter{figure}{0}
\setcounter{example}{0}
\setcounter{proposition}{0}
\setcounter{corollary}{0}
\setcounter{theorem}{0}
\setcounter{table}{0}
\setcounter{condition}{0}
\setcounter{lemma}{0}
\setcounter{remark}{0}

\renewcommand {\theproposition} {S\arabic{proposition}}
\renewcommand {\theexample} {S\arabic{example}}
\renewcommand {\thefigure} {S\arabic{figure}}
\renewcommand {\thetable} {S\arabic{table}}
\renewcommand {\theequation} {S\arabic{equation}}
\renewcommand {\thelemma} {S\arabic{lemma}}
\renewcommand {\thesection} {S\arabic{section}}
\renewcommand {\thetheorem} {S\arabic{theorem}}
\renewcommand {\thecorollary} {S\arabic{corollary}}
\renewcommand {\thecondition} {S\arabic{condition}}
\renewcommand {\thepage} {S\arabic{page}}
\renewcommand {\theremark} {S\arabic{remark}}

\setcounter{page}{1}

  \setcounter{equation}{0}
\renewcommand {\theequation} {S\arabic{equation}}
  \setcounter{lemma}{0}
\renewcommand {\thelemma} {S\arabic{lemma}}
   \setcounter{definition}{0}
\renewcommand {\thedefinition} {S\arabic{definition}}
   \setcounter{example}{0}
\renewcommand {\theexample} {S\arabic{example}}
   \setcounter{proposition}{0}
\renewcommand {\theproposition} {S\arabic{proposition}}
   \setcounter{corollary}{0}
\renewcommand {\thecorollary} {S\arabic{corollary}}

 \bigskip 
 

\section{Harm Rate Identifiability under a Gaussian Copula}

In this section, we derive the identifiability formula for the harm rate presented at the end of Section \ref{sec:identi} of the manuscript.  
Under the Gaussian copula assumption with parameter $\rho$, the joint distribution of $(Y_t(a), Y_t(a'))$ given $X_t=x$ is a bivariate normal distribution:
\[
    \begin{pmatrix} Y_t(a) \\ Y_t(a') \end{pmatrix} \Bigg| X_t=x \sim \mathcal{N} \left( \begin{pmatrix} r(x,a) \\ r(x,a') \end{pmatrix}, \begin{pmatrix} \sigma^2(x,a) & \rho \sigma(x,a)\sigma(x,a') \\ \rho \sigma(x,a)\sigma(x,a') & \sigma^2(x,a') \end{pmatrix} \right).
\]
Let $\Delta Y_t = Y_t(a) - Y_t(a')$ be the difference in outcomes. Since $\Delta Y_t$ is a linear combination of jointly normal variables, it follows a univariate normal distribution:
\[
    \Delta Y_t \mid X_t=x \sim \mathcal{N}\left( \Delta_r(x), \sigma(x; a, a') \right),
\]
where $\Delta_r(x) = r(x,a) - r(x,a')$ with
$r(x, a) = \E[ Y_t \mid X_t = x, A_t=a]$, and 
$\sigma(x; a, a') = \sqrt{\sigma^2(x,a) + \sigma^2(x,a')  - 2 \rho \sigma(x,a) \sigma(x,a')}$ with $\sigma^2(x,a) = \text{Var}(Y_t\mid X_t = x, A_t = a)$. The harm rate is the probability that the action $a$ leads to a worse outcome than $a'$:
\begin{align*}
    \HR(x; a, a') &= \Pr(Y_t(a) - Y_t(a') < 0 \mid X_t=x) \\
    &= \Pr\left( \frac{\Delta Y_t - \Delta_r(x)}{\sigma(x; a, a')} < \frac{0 - \Delta_r(x)}{\sigma(x; a, a')} \right) \\
    &= \Phi\left( -\frac{\Delta_r(x)}{\sigma(x; a, a')} \right) \\
    &= \Phi\left( \frac{r(x,a') - r(x,a)}{\sqrt{\sigma^2(x,a) + \sigma^2(x,a') - 2\rho \sigma(x,a)\sigma(x,a')}} \right).
\end{align*}
This confirms the identifiability formula presented in the main text.

We could use the plug-in method for estimating $\HR(x;a, a')$. The corresponding estimation procedure consists of two steps. (1) Estimate $r(x,a)$ and $\sigma (x,a)$. 
    we could estimate $r(x, a)$ by directly regressing $Y$ on $X$ using data with $A = a$, and estimate $ \sigma^2(x,a)$ by regressing $(Y-\hat r(X_t, a))^2$ on $X_t$ using the data with $A=a$, denoted as $\hat r(x,a)$ and $\hat \sigma(x, a)$. (2) Estimate $\HR(x;a, a')$ using the plug-in method. 

\section{Extension to Harm Value}
\subsection{Formalization}
Similar to Equation \eqref{eq:R-hr} in the manuscript, we can formulate the utility function by incorporating the Harm Value (HQ) to account for the \textit{severity} of the potential harm, rather than just its probability. In this case, we define the utility at time $t$ under action $a$ as:
\begin{equation} \label{eq:R-hq}
    R_t(a) = Y_t(a) - \beta \cdot \HQ(X_t; a, a'),
\end{equation}
where $\beta > 0$ remains the risk-aversion parameter. Here, $\HQ(X_t; a, a')$ represents the expected magnitude of the adverse effect (excess loss) caused by action $a$ relative to the reference action $a'$. The definitions of the value function $V^\pi$, the action-value function $Q^\pi$, and the overall objective $J^*$ follow the same structure as in the manuscript, with the HR-based utility replaced by this HQ-based utility.

\subsection{Identifiability}
For the Gaussian copula, the joint distribution of potential outcomes is fully specified by the marginals and the correlation coefficient. The Harm Value is identifiable under Assumptions \ref{assump1}–\ref{assump-copula} in the manuscript and admits a closed-form expression. Specifically, the identifiability formula for $\HQ(x; a, a')$ is given by 
\begin{equation} \label{eq-s2} \begin{split}
    \HQ(x;a, a') =& \left(r(x, a')-r(x, a)\right)\Phi\left(\dfrac{r(x,a')-r(x,a)}{\sigma(x; a, a')}\right)\\  +&\dfrac{\sigma(x; a, a')}{\sqrt{2\pi}}\exp\left\{-\dfrac{(r(x,a)-r(x,a'))^2}{2\sigma(x; a, a')^2}\right\}. 
    \end{split}
\end{equation} 
Also, we may employ a plug-in approach to estimate $\HQ(x; a, a')$. Specifically, we first estimate $r(x, a)$ and $\sigma(x, a)$, and then obtain $\HQ(x; a, a')$ by substituting these estimates into the corresponding expression.

\medskip 
\noindent
\emph{Proof of Equation \eqref{eq-s2}}. 
Recall that under a Gaussian copula model,  $\Delta Y_t = Y_t(a) - Y_t(a')$ follows a univariate normal distribution:
$
    \Delta Y_t \mid X_t=x \sim \mathcal{N}\left( \Delta_r(x), \sigma(x; a, a') \right),
$
where  
$\Delta_r(x) = r(x,a) - r(x,a')$ and
$\sigma(x; a, a') = \sqrt{\sigma^2(x,a) + \sigma^2(x,a')  - 2 \rho \sigma(x,a) \sigma(x,a')}$.
Using the properties of the truncated normal distribution, we have: 
\begin{align*}
    \HQ(x; a, a') &= \E\left[ (Y_t(a') - Y_t(a)) \cdot \I(Y_t(a) < Y_t(a')) \mid X_t=x \right] \\
    &= -\E\left[ \Delta Y_t \cdot \I(\Delta Y_t < 0) \right] \\
    &= -\int_{-\infty}^{0} r \cdot \frac{1}{\sqrt{2\pi}\sigma(x; a, a') } \exp\left\{ -\frac{(r - \Delta_r(x))^2}{2\sigma^2(x; a, a') } \right\} \mathrm{d}r.
\end{align*}
Let $z = (r - \Delta_r(x))/\sigma(x; a, a')$. The integral transforms to: 
\begin{align*}
    \HQ(x; a, a') &= -\int_{-\infty}^{-\frac{\Delta_r(x)}{\sigma(x; a, a') }} (\Delta_r(x) + \sigma(x; a, a')  z) \phi(z) \mathrm{d}z \\
    &= -\Delta_r(x) \Phi\left(-\frac{\Delta_r(x)}{\sigma(x; a, a')}\right) + \sigma(x; a, a')  \phi\left(-\frac{\Delta_r(x)}{\sigma(x; a, a') }\right) \\
    &= (r(x,a') - r(x,a)) \Phi\left( \frac{r(x,a') - r(x,a)}{\sigma(x; a, a') } \right) \\
    & \quad + \sigma(x; a, a') \phi\left( \frac{r(x,a') - r(x,a)}{\sigma(x; a, a')} \right),
\end{align*}
where $\phi(\cdot)$ is the probability density function of the standard normal distribution.

$\hfill \Box$

Next, we briefly discuss the connection between the Gaussian copula model and the additive heteroscedastic Gaussian noise model.
The additive heteroscedastic Gaussian noise model, defined in the following Assumption \ref{ass:additive}, implies a specific dependence structure between the potential outcomes. 

\begin{assumption}[Additive Heteroscedastic Gaussian Noise]\label{ass:additive}
Suppose that $X_t, A_t$ and $Y_t$ satisfy the following additive gaussian noise model: 
\begin{equation}\label{eq:additive}
\left\{
\begin{aligned}
&Y_t = r(X_t, A_t) + \sigma(X_t, A_t) \epsilon_t \\
&\epsilon_t  \sim \mathcal{N}(0, 1),~  \sigma(X_t, A_t) > 0,
\end{aligned}
\right.
\end{equation}
where $\epsilon_t$ is the noise variable. 
\end{assumption}

Under Assumptions \ref{ass:additive}, $\E[Y_t(a) | X_t=x] = r(x, a)$ and $\var[Y_t(a) | X_t=s]=\sigma^2(x, a)$. Thus the two terms $r(s, a)$ and $\sigma(s, a)$ represent the conditional mean and standard deviation of $Y_t(a)$. 
Notably, the additive heteroscedastic model in~\eqref{eq:additive} is  flexible, as it places no restrictions on the functional forms of the conditional mean $r(X_t, A_t)$ or the noise structure $\sigma(X_t, A_t)$. Both components can vary with the state and action, allowing for rich heterogeneity across individuals and treatments.
In addition, the model \eqref{eq:additive} also permits the noise $\epsilon_t$ to depend on the current state $X_t$, 
 see Remark \ref{rmk1} below for more details. 
\begin{remark}[$\epsilon_t$ depends on $X_t$] \label{rmk1} 
Suppose that the true model is  
\begin{equation} \label{eq:additive_ex}
        Y_t = r(X_t, A_t) + \sigma(X_t, A_t) \epsilon_t, ~\epsilon_t \mid X_t \sim \mathcal{N}\left(M_t(X_t), D_t^2(X_t)\right), 
\end{equation}
where $\sigma(X_t, A_t)>0, \ D(X_t) > 0$.
In model \eqref{eq:additive_ex}, the conditional standard normal is violated. Nevertheless, we could write $\epsilon_t$ as $M_t(X_t) + D_t(X_t) \tilde \epsilon_t$ (i.e., let $\tilde \epsilon_t = (\epsilon_t - M_t(X_t))/D_t(X_t)$), where $\tilde \epsilon_t\mid X_t \sim N(0, 1)$. Let $\tilde r(X_t, A_t) = r(X_t, A_t) + \sigma(X_t, A_t)M_t(X_t)$ and $\tilde \sigma(X_t, A_t) = \sigma(X_t, A_t) D_t(X_t)$, then model \eqref{eq:additive_ex} can be reformulated as the form of model \eqref{eq:additive}. 
\end{remark}

Despite the flexibility of the additive heteroscedastic Gaussian noise model, it corresponds to a special case of the Gaussian copula with correlation coefficient $\rho = 1$. Since the noise term $\epsilon_t$ is shared across different actions, it is not hard to verify that the joint distribution of $(Y(a), Y(a'))$ conditional on $X_t$ is Gaussian with correlation coefficient $\rho = 1$, a property commonly referred to as rank preservation~\citep{Heckman1997, Chernozhukov2005}.  
\section{Additional Numerical Results}

\paragraph{Implementation details}
For offline policy learning, we employ the Fitted Q Iteration (FQI) algorithm. The Q-function is approximated by a Multi-layer Perceptron (MLP) with a single hidden layer of 64 units. The network is optimized using the Adam optimizer with a learning rate of 0.001 and a batch size of 64. We train the model for 50 epochs using the mean squared error (MSE) as the loss function. To ensure training stability, the target network parameters are updated every 10 epochs. A discount factor of $\gamma = 0.9$ is applied throughout the experiments. The training process includes a convergence check, terminating early if the sum of absolute differences between the Q-network and target network parameters falls below a threshold of $10^{-5}$. All computations were performed on a local machine equipped with an Apple M3 Pro chip and 18 GB of unified memory. 
For Assumption~\ref{assump-copula}, we adopt a Gaussian copula with parameter $\rho$ in both the simulation and application settings. In the simulation study, we set $\rho = 1$, whereas in the application analysis, $\rho$ is treated as a sensitivity parameter and varied over the values $0$, $0.5$, and $1$.

\paragraph{Numerical details}
To complement the experimental results in the main text, we provide a comprehensive breakdown of the performance of our algorithm across different dataset sizes, with the values of $N \in \{100, 500, 1000, 2000\}$. Table \ref{tab:N_linear} and \ref{tab:N_nonlinear} report the detailed numerical values for discounted rewards and average harms over 100 replications, under the linear and nonlinear settings, respectively. The results show that as $N$ increases, the standard deviation narrows down, further supporting the consistency properties claimed in the manuscript.

\begin{table}[t]
    \centering
        \caption{Performance comparison across different sample sizes ($N$) under linear setting.}
        \setlength{\tabcolsep}{6pt}
    \begin{tabular}{lcccc}
        \toprule
        \textbf{Sample Size} & \textbf{Unaware} & \textbf{Harm-Aware} & \textbf{Behavior} & \textbf{Random} \\
        \midrule
        \multicolumn{5}{c}{\textbf{Discounted Reward}} \\
        \midrule
        $N=100$  & $2.310 \pm 0.076$ & $2.271 \pm 0.083$ & $1.246 \pm 0.066$ & $1.549 \pm 0.061$ \\
        $N=500$  & $2.303 \pm 0.035$ & $2.264 \pm 0.037$ & $1.246 \pm 0.031$ & $1.545 \pm 0.026$ \\
        $N=1000$ & $2.304 \pm 0.025$ & $2.264 \pm 0.027$ & $1.249 \pm 0.017$ & $1.548 \pm 0.022$ \\
        $N=2000$ & $2.303 \pm 0.018$ & $2.264 \pm 0.020$ & $1.247 \pm 0.016$ & $1.549 \pm 0.016$ \\
        \midrule
        \multicolumn{5}{c}{\textbf{Average Harm}} \\
        \midrule
        $N=100$  & $0.126 \pm 0.010$ & $0.071 \pm 0.009$ & $0.134 \pm 0.014$ & $0.135 \pm 0.013$ \\
        $N=500$  & $0.125 \pm 0.006$ & $0.070 \pm 0.005$ & $0.132 \pm 0.006$ & $0.134 \pm 0.006$ \\
        $N=1000$ & $0.126 \pm 0.005$ & $0.070 \pm 0.004$ & $0.133 \pm 0.005$ & $0.134 \pm 0.004$ \\
        $N=2000$ & $0.126 \pm 0.005$ & $0.070 \pm 0.003$ & $0.133 \pm 0.003$ & $0.134 \pm 0.003$ \\
        \bottomrule
    \end{tabular}
    \label{tab:N_linear}
\end{table}

\begin{table}[t]
    \centering
    \caption{Performance comparison across different sample sizes ($N$) under nonlinear setting.}
    \setlength{\tabcolsep}{6pt}
    \begin{tabular}{lcccc}
        \toprule
        \textbf{Sample Size} & \textbf{Unaware} & \textbf{Harm-Aware} & \textbf{Behavior} & \textbf{Random} \\
        \midrule
        \multicolumn{5}{c}{\textbf{Discounted Reward}} \\
        \midrule
        $N=100$  & $3.781 \pm 0.098$ & $3.396 \pm 0.149$ & $2.788 \pm 0.119$ & $2.796 \pm 0.116$ \\
        $N=500$  & $3.775 \pm 0.072$ & $3.390 \pm 0.143$ & $2.772 \pm 0.047$ & $2.789 \pm 0.051$ \\
        $N=1000$ & $3.780 \pm 0.070$ & $3.393 \pm 0.143$ & $2.783 \pm 0.038$ & $2.799 \pm 0.031$ \\
        $N=2000$ & $3.779 \pm 0.068$ & $3.393 \pm 0.143$ & $2.780 \pm 0.027$ & $2.793 \pm 0.027$ \\
        \midrule
        \multicolumn{5}{c}{\textbf{Average Harm}} \\
        \midrule
        $N=100$  & $0.096 \pm 0.024$ & $0.036 \pm 0.007$ & $0.098 \pm 0.005$ & $0.079 \pm 0.004$ \\
        $N=500$  & $0.096 \pm 0.024$ & $0.036 \pm 0.007$ & $0.099 \pm 0.002$ & $0.080 \pm 0.002$ \\
        $N=1000$ & $0.096 \pm 0.024$ & $0.036 \pm 0.007$ & $0.099 \pm 0.001$ & $0.079 \pm 0.001$ \\
        $N=2000$ & $0.096 \pm 0.024$ & $0.036 \pm 0.007$ & $0.099 \pm 0.001$ & $0.080 \pm 0.001$ \\
        \bottomrule
    \end{tabular}
    \label{tab:N_nonlinear}
\end{table}

\begin{table}[t]
    \centering
        \caption{Performance comparison across different penalty parameters ($\beta$) under linear setting.}
        \setlength{\tabcolsep}{6pt}
    \begin{tabular}{lcccc}
        \toprule
        \textbf{Penalty} & \textbf{Unaware} & \textbf{Harm-Aware} & \textbf{Behavior} & \textbf{Random} \\
        \midrule
        \multicolumn{5}{c}{\textbf{Discounted Reward} } \\
        \midrule
        $\beta=0.1$ & $2.304 \pm 0.002$ & $2.264 \pm 0.003$ & $1.249 \pm 0.002$ & $1.548 \pm 0.002$ \\
        $\beta=0.3$ & $2.304 \pm 0.002$ & $2.060 \pm 0.004$ & $1.249 \pm 0.002$ & $1.548 \pm 0.002$ \\
        $\beta=0.5$ & $2.304 \pm 0.002$ & $1.712 \pm 0.015$ & $1.249 \pm 0.002$ & $1.548 \pm 0.002$ \\
        $\beta=0.7$ & $2.304 \pm 0.002$ & $1.402 \pm 0.028$ & $1.249 \pm 0.002$ & $1.548 \pm 0.002$ \\
        $\beta=0.9$ & $2.304 \pm 0.002$ & $0.981 \pm 0.031$ & $1.249 \pm 0.002$ & $1.548 \pm 0.002$ \\
        \midrule
        \multicolumn{5}{c}{\textbf{Average Harm}} \\
        \midrule
        $\beta=0.1$ & $0.126 \pm 0.001$ & $0.070 \pm 0.000$ & $0.133 \pm 0.001$ & $0.134 \pm 0.000$ \\
        $\beta=0.3$ & $0.126 \pm 0.001$ & $0.047 \pm 0.000$ & $0.133 \pm 0.001$ & $0.134 \pm 0.000$ \\
        $\beta=0.5$ & $0.126 \pm 0.001$ & $0.043 \pm 0.000$ & $0.133 \pm 0.001$ & $0.134 \pm 0.000$ \\
        $\beta=0.7$ & $0.126 \pm 0.001$ & $0.043 \pm 0.000$ & $0.133 \pm 0.001$ & $0.134 \pm 0.000$ \\
        $\beta=0.9$ & $0.126 \pm 0.001$ & $0.043 \pm 0.000$ & $0.133 \pm 0.001$ & $0.134 \pm 0.000$ \\
        \bottomrule
    \end{tabular}
    \label{tab:beta_linear}
\end{table}

\begin{table}[t]
    \centering
    \caption{Performance comparison across different penalty parameters ($\beta$) under nonlinear setting.}
    \setlength{\tabcolsep}{6pt}
    \begin{tabular}{lcccc}
        \toprule
        \textbf{Penalty} & \textbf{Unaware} & \textbf{Harm-Aware} & \textbf{Behavior} & \textbf{Random} \\
        \midrule
        \multicolumn{5}{c}{\textbf{Discounted Reward} } \\
        \midrule
        $\beta=0.60$ & $3.780 \pm 0.007$ & $3.741 \pm 0.011$ & $2.783 \pm 0.004$ & $2.799 \pm 0.003$ \\
        $\beta=0.70$ & $3.780 \pm 0.007$ & $3.572 \pm 0.015$ & $2.783 \pm 0.004$ & $2.799 \pm 0.003$ \\
        $\beta=0.80$ & $3.780 \pm 0.007$ & $3.425 \pm 0.017$ & $2.783 \pm 0.004$ & $2.799 \pm 0.003$ \\
        $\beta=0.90$ & $3.780 \pm 0.007$ & $3.380 \pm 0.017$ & $2.783 \pm 0.004$ & $2.799 \pm 0.003$ \\
        \midrule
        \multicolumn{5}{c}{\textbf{Average Harm}} \\
        \midrule
        $\beta=0.60$ & $0.096 \pm 0.002$ & $0.067 \pm 0.001$ & $0.099 \pm 0.000$ & $0.079 \pm 0.000$ \\
        $\beta=0.70$ & $0.096 \pm 0.002$ & $0.048 \pm 0.001$ & $0.099 \pm 0.000$ & $0.079 \pm 0.000$ \\
        $\beta=0.80$ & $0.096 \pm 0.002$ & $0.038 \pm 0.001$ & $0.099 \pm 0.000$ & $0.079 \pm 0.000$ \\
        $\beta=0.90$ & $0.096 \pm 0.002$ & $0.035 \pm 0.001$ & $0.099 \pm 0.000$ & $0.079 \pm 0.000$ \\
        \bottomrule
    \end{tabular}
    \label{tab:beta_nonlinear}
\end{table}

Table \ref{tab:beta_linear} and \ref{tab:beta_nonlinear} display both the discounted reward and average harm across a wide range of $\beta$ values, under the cases of linear and nonlinear. These results illustrate the trade-off between maximizing utility and minimizing risk as the safety penalty increases. Results show that increasing $\beta$ enforces stricter safety constraints and leads to a decrease in Average Harm. This confirms our expectations discussed in the main text. Although a higher $\beta$ results in a lower Discounted Reward, it demonstrates that our method effectively controls risk. Practitioners can adjust $\beta$ to meet specific safety requirements.

\section{An Alternative Formulation of the Target Policy}

Recall that the target policy $\pi^*$ maximizes $J(\pi) = J_Y(\pi) -  \beta \cdot J_{\HR}(\pi)$, and $\bar{\pi}^*$  is defined as the solution to
 \begin{align*} 
 \begin{cases}
 \max_{\pi} & J_Y(\pi) \\
  \text{s.t.} &   J_{\HR}(\pi) \le \delta.
 \end{cases}
 \end{align*} 

The following Proposition establishes the connection between $\pi^*$ and  $\bar{\pi}^*$. 

\begin{proposition}
\label{thm:beta}
For a given $\beta$, the target policy $\pi^*$ coincides with $\bar{\pi}^*$ provided that 
 \[   \delta =  \E[ \sum_{t=0}^{\infty}  \gamma^t   \HR(X_t; \pi_t^*(X_t), a') ]. \]
\end{proposition}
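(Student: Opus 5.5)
The argument is the standard Lagrangian (penalty-to-constraint) one. Write $J(\pi)=J_Y(\pi)-\beta J_{\HR}(\pi)$. Because utilities are bounded, $\E^\pi[\sum_t\gamma^t R_t]$ splits by linearity into the outcome and harm components, so this decomposition holds for every policy $\pi$, including history-dependent ones. Set $\delta = J_{\HR}(\pi^*)$, which is exactly the displayed quantity. We then show two things: $\pi^*$ is feasible for the constrained problem, and $\pi^*$ attains the largest value of $J_Y$ among all feasible policies.

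\emph{Feasibility.} By the choice of $\delta$, $J_{\HR}(\pi^*) = \delta$. Hence the constraint $J_{\HR}(\pi^*)\le\delta$ holds with equality, and $\pi^*$ lies in the feasible set.

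\emph{Optimality.} Take any policy $\pi$ with $J_{\HR}(\pi)\le \delta$. Since $\pi^*$ maximizes $J$ over all policies (by \eqref{eq7}), we have $J_Y(\pi)-\beta J_{\HR}(\pi) \le J_Y(\pi^*)-\beta J_{\HR}(\pi^*)$. Rearranging gives $J_Y(\pi) \le J_Y(\pi^*) - \beta\{\delta - J_{\HR}(\pi)\}$. The bracket is nonnegative by feasibility and $\beta>0$, so $J_Y(\pi)\le J_Y(\pi^*)$. Therefore $\pi^*$ solves the constrained problem, and we may take $\bar\pi^* = \pi^*$.

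\emph{Main obstacle: the meaning of ``coincides''.} If the constrained problem has several maximizers, the argument above shows only that $\pi^*$ is one of them; equivalently, it shows $J_Y(\bar\pi^*) = J_Y(\pi^*)$. Going further, for any constrained maximizer $\bar\pi$ we get $J(\bar\pi) = J_Y(\pi^*) - \beta J_{\HR}(\bar\pi) \ge J_Y(\pi^*) - \beta\delta = J(\pi^*)$. So $\bar\pi$ is also a maximizer of $J$. If the maximizer of $J$ is unique, for example as a consequence of a strict action gap as in Assumption~\ref{ass:margin}, this forces $\bar\pi = \pi^*$. Otherwise the two optimization problems have the same optimal values and share optimal solutions. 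I would state the result in that sense and mention uniqueness as the condition under which the identification is literal.
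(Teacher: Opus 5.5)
Your proof is correct, and it takes a cleaner route than the paper's. The paper argues in two steps. First, by contradiction and a partition of trajectories into groups $\mathcal{G}_1,\dots,\mathcal{G}_4$, it argues that the constraint must bind at $\bar\pi^*$: if $J_{\HR}(\bar\pi^*)<\delta$, more aggressive actions could be assigned to some trajectories in $\mathcal{G}_1$ to raise $J_Y$ within budget. Second, it passes to the Lagrangian with the multiplier set equal to $\beta$ and asserts $\bar\pi^*=\pi^*$.

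You instead use the single weak-duality inequality $J_Y(\pi)\le J_Y(\pi^*)-\beta\{\delta-J_{\HR}(\pi)\}$ for feasible $\pi$, where the nonnegative slack term absorbs the non-binding case. This removes the need for the binding-constraint step, which is the least rigorous part of the paper's argument because it presumes harm can always be traded upward toward $\delta$ while strictly increasing $J_Y$. Your inequality is also itself the check that $\beta$ is a valid multiplier for this particular $\delta$, which the paper only asserts. As a by-product, you recover the paper's first step rigorously: for any constrained maximizer $\bar\pi$ you get $J(\bar\pi)\ge J(\pi^*)$, hence $J(\bar\pi)=J(\pi^*)$, and since $\beta>0$, $J_{\HR}(\bar\pi)=\delta$.

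What the paper's framing offers in exchange is mainly interpretation: the outcome--harm trade-off is confined to trajectories in $\mathcal{G}_1$. Neither argument covers the converse (recovering the constrained solution for an arbitrary $\delta$ by some penalty $\beta$), which would need a genuine duality argument, but the proposition does not claim it.

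One small correction to your side remark on uniqueness. The margin condition in Assumption~\ref{ass:margin} only bounds $\Pr(0<\Delta(x)\le t)$, so it still allows $\Delta(x)=0$ with positive probability. It therefore does not by itself make the maximizer of $J$ unique. In any case, uniqueness can only hold up to the policy's behavior on states or histories that are never reached.
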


Theorem \ref{thm:beta} reveals that the unconstrained optimization problem ($\max_{\pi} J(\pi)$) could be reformulated as a constrained problem that maximizes discounted cumulative outcome while keeping the harm rate induced by the learned policy below a given threshold. 

\noindent
{\bf Proof of Proposition \ref{thm:beta}}.  
For the constrained optimization problem, we show that the optimal policy $\bar \pi^*$, denoted as $\bar\pi^* = (\bar\pi_1^*, ..., \bar\pi_T^*)$ 
must satisfy 
\begin{equation}  \label{eq-S12}
\begin{split}
    J_{\text{HR}}(\bar\pi^*) 
={}& \E^{\bar\pi^*}\Big[ \sum_{t=0}^{\infty} \gamma^t \text{HR}_t(X_t; A_t, a')  \Big] = \E[ \sum_{t=0}^{\infty}  \gamma^t   \HR(X_t; \bar \pi_t^*(X_t), a') ]  =
\delta,  
\end{split} 
\end{equation} 
that is, the optimal solution is obtained at the boundary of the constraint.

We first partition the trajectories into four groups according to the discounted cumulative outcome and the discounted cumulative harm rate induced by $\bar \pi^*$, as shown in Table \ref{Tab-S3}. For groups $\mathcal{G}_2$ and $\mathcal{G}_4$, $\bar \pi^*$ fails to increase discounted cumulative outcome and instead introduces discounted cumulative harm.  
In contrast,  for group $\mathcal{G}_3$, $\bar \pi^*$ increases discounted cumulative outcome without causing discounted cumulative harm. Only for  group $\mathcal{G}_1$, $\bar \pi^*$ involves a tradeoff between discounted cumulative outcome and harm. 

   \begin{table}[h!] 
\centering
  \caption{Classification of trajectories, where $Y_{t}(\bar\pi_t^*(X_t)) :=  \sum_{a} \gamma^t \mathbb{I}( \bar\pi_t^*(X_t)  = a ) Y_t(a)$.} 
  \begin{tabular}{cccc}
    \toprule
                 &  $\sum_{t=0}^{\infty} \gamma^t \E[ Y_{t}(\pi_t^*(X_t)) \mid X_{t} ]  > 0$ &  $\sum_{t=0}^{\infty} \gamma^t \E[ Y_{t}(\pi_t^*(X_t)) \mid X_{t} ]  \leq 0 $ \\
    \midrule 
 $\sum_{t=0}^{\infty}  \gamma^t   \HR(X_t; \bar\pi_t^*(X_t), a')  > 0$  & $\mathcal{G}_1$ & $\mathcal{G}_2$ \\
    $\sum_{t=0}^{\infty}  \gamma^t   \HR(X_t; \bar\pi_t^*(X_t), a') = 0 $  & $\mathcal{G}_3$ & $\mathcal{G}_4$ \\
    \bottomrule
  \end{tabular}
  \label{Tab-S3}  
\end{table}

Next, we prove \eqref{eq-S12} using the method of contradiction. 
Specifically, if we assume that the optimal policy is not obtained at the boundary of the constraint, i.e., 
\[  
    J_{\text{HR}}(\bar \pi^*)  < \delta.  \] 
When $J_{\text{HR}}(\bar \pi^*)  < \delta$, there are some trajectories in the group $\mathcal{G}_1$ who are not assigned the action with positive  $\sum_{t=0}^{\infty} \gamma^t \E[ Y_{t}(\bar\pi_t^*(X_t)) \mid X_{t} ]$. Thus, we could find another policy, denoted as $\tilde \pi^*$,  
  that assigns more progressive action to some of the trajectories in $\mathcal{G}_1$ compared with $\bar\pi^*$. In contrast to optimal policy $\bar\pi^*$,  $\tilde \pi^*$ yields a higher discounted cumulative outcome but increases the discounted cumulative harm. That is, the policy $\tilde \pi^*$ will lead to a discounted cumulative harm rate closer to $\delta$ but will have a higher discounted cumulative outcome than  $\bar\pi^*$; thus, $\bar\pi^*$ is not the optimal policy, which contradicts its definition of $\bar\pi^*$.   
Therefore,  
\begin{align*}
\bar\pi^*  
    = \begin{cases}
    \displaystyle 
    \arg\max_{\pi}~  J_Y(\pi), \\
    \text{s.t. } ~
     J_{\HR}(\pi) = \delta.
    \end{cases}
\end{align*}

Then, by introducing the Lagrange multiplier $\beta$ (we set it equal to the risk-aversion factor $\beta$ and then $\bar\pi^*  = \pi^*$), $\bar\pi^*$ satisfies 
\[ \begin{cases}
\displaystyle
\arg \max_{\pi} \quad J_Y(\pi) -  \beta \cdot  J_{\HR}(\pi)  \\
s.t. \quad   J_{\HR}(\pi)  = \delta.  
\end{cases} \]  
This finishes the proof. 

\hfill $\Box$

\newpage 
\section{Additional Technical Proofs}

\subsection{Proof of Theorem \ref{thm:regret}}

The following Lemmas \ref{lem:HuTh8}–\ref{lem:ChLe13} will be used in the proof of Theorem \ref{thm:regret}. 

\begin{lemma}\label{lem:HuTh8}
    Under the completeness assumption \ref{ass:linear}(ii) and feature convergence assumption \ref{ass:linear}(iii) in the manuscript, and assume $|\hat{r}_t|\leq M$, we have 
    \begin{align}\label{eq:Q}
    \|Q^*- \hat Q_K\|_{\infty} \le \sum_{t=0}^{K-1} \gamma^t\|\hat Q_{K-t} - \mathcal{T} \hat Q_{K-t-1}\|_{\infty} +  \frac{\gamma^K M}{1-\gamma}.
\end{align}
\end{lemma}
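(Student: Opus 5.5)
The plan is the standard telescoping argument for approximate value iteration. It uses only the contraction property of the Bellman optimality operator $\mathcal{T}$ in $\|\cdot\|_\infty$ and the fixed-point identity $Q^* = \mathcal{T} Q^*$. The completeness and coverage conditions are not needed for the deterministic inequality~\eqref{eq:Q}. They matter only later, when each one-step error $\|\hat Q_{k} - \mathcal{T}\hat Q_{k-1}\|_\infty$ is bounded via OLS concentration. So I would treat the lemma as a purely algebraic statement about the sequence $\hat Q_0,\dots,\hat Q_K$ produced by Algorithm~\ref{algo:FQI}.

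\textbf{Step 1.} Write, for each $k\ge 1$,
\begin{equation*}
Q^* - \hat Q_k = (\mathcal{T} Q^* - \mathcal{T}\hat Q_{k-1}) + (\mathcal{T}\hat Q_{k-1} - \hat Q_k).
\end{equation*}
The operator $\mathcal{T}Q(x,a)=r(x,a)+\gamma\E[\max_{a'}Q(X',a')\mid x,a]$ satisfies $\|\mathcal{T}Q-\mathcal{T}Q'\|_\infty\le\gamma\|Q-Q'\|_\infty$, because $|\max_{a'}Q(x',a')-\max_{a'}Q'(x',a')|\le \|Q-Q'\|_\infty$. Hence
\begin{equation*}
\|Q^*-\hat Q_k\|_\infty \le \gamma\|Q^*-\hat Q_{k-1}\|_\infty + \|\hat Q_k-\mathcal{T}\hat Q_{k-1}\|_\infty .
\end{equation*}

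\textbf{Step 2.} Unroll this recursion from $k=K$ down to $k=1$. By induction on the number of unrolled steps, this gives
\begin{equation*}
\|Q^*-\hat Q_K\|_\infty\le \sum_{t=0}^{K-1}\gamma^t\|\hat Q_{K-t}-\mathcal{T}\hat Q_{K-t-1}\|_\infty+\gamma^K\|Q^*-\hat Q_0\|_\infty .
\end{equation*}

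\textbf{Step 3.} Control the initialization term. With the default $\hat Q_0=0$, it reduces to $\|Q^*\|_\infty$. Since the per-step (pseudo-)utility is bounded by $M$ in absolute value (the lemma's hypothesis $|\hat r_t|\le M$), $Q^*$ is a discounted sum of such terms, so $\|Q^*\|_\infty\le \sum_t\gamma^t M = M/(1-\gamma)$. This yields the claimed $\gamma^K M/(1-\gamma)$.

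The main subtlety is in Step 3: $Q^*$ is defined through the true utility $Y_t-\beta\,\HR$, whose bound is $M+\beta$ rather than $M$. I would interpret $M$ in the lemma as the bound on the relevant utility, including the $\beta$ penalty, consistent with the $(M+\beta)$ factor appearing in Theorem~\ref{thm:regret}. If $\hat Q_0\neq 0$, I would instead use the bound $\|\hat Q_0\|_\infty\le M/(1-\gamma)$, obtained by restricting $\Q$ to that range, at the cost of a factor $2$.

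A second point to check is that the recursion compares against the Bellman operator used in the algorithm. Here the regression target uses $\hat R$ rather than $R$. I would define $\mathcal{T}$ in this lemma with the same reward as $Q^*$ and absorb the reward mismatch into the one-step errors $\|\hat Q_k-\mathcal{T}\hat Q_{k-1}\|_\infty$, which is where the $\epsilon_n$ term of Theorem~\ref{thm:regret} later enters.
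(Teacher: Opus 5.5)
Your proposal is correct and follows the same argument the paper relies on. The paper simply defers to Theorem 8 of \citet{hu2025fast}, whose proof is this telescoping of $Q^*-\hat Q_k=(\mathcal{T}Q^*-\mathcal{T}\hat Q_{k-1})+(\mathcal{T}\hat Q_{k-1}-\hat Q_k)$ via the sup-norm $\gamma$-contraction of $\mathcal{T}$, followed by bounding $\gamma^K\|Q^*-\hat Q_0\|_\infty$ with $\hat Q_0=0$. Your side remarks are also accurate: completeness and coverage play no role in this deterministic step, and $M$ must bound the true utility (effectively $M+\beta$, consistent with Theorem~\ref{thm:regret}).
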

{\bf Proof.} This can be shown similarly as Theorem 8 in \citet{hu2025fast}.

$\hfill \Box$

\begin{lemma}\label{lem:HuLe7}
    Under the completeness assumption \ref{ass:linear}(ii) and feature convergence assumption \ref{ass:linear}(iii), and assume $|\hat{r}_t|\leq M$, for any $\delta>0$, we have
    \begin{align}
        \P\left(\sup_{Q\in\Q}\|\hat{w}_Q\phi-\T Q\|_{\infty} \geq \delta \right) \leq 6d \exp{\left(-\frac{\lambda_0^2}{5174d^2(M+W)^2}n\delta^2\right)}
    \end{align}
\end{lemma}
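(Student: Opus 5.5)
The plan is to adapt the argument behind Lemma 7 of \citet{hu2025fast} to the pseudo-utility dataset $\tilde\D$. The only real change is that the responses $\widehat R_i$ are bounded by $M$ rather than being the raw outcomes.

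\emph{Step 1: reduce to a noise term.} Fix $Q=w^\top\phi\in\Q$. By completeness (Assumption~\ref{ass:linear}(ii)) there is $w_Q$ with $\T Q=w_Q^\top\phi$. The least-squares solution is
\[
\hat w_Q=\widehat\Sigma^{-1}\frac1n\sum_i\phi_i y_i^Q,\qquad \widehat\Sigma=\frac1n\sum_i\phi_i\phi_i^\top,\qquad y_i^Q=\widehat R_i+\gamma\max_{a}Q(X_i',a),
\]
where $\phi_i=\phi(X_i,A_i)$. Since $\E[y_i^Q\mid X_i,A_i]=\T Q(X_i,A_i)$ up to the reward estimation, which is absorbed in $|\hat r_t|\le M$, we get
\[
\hat w_Q-w_Q=\widehat\Sigma^{-1}\frac1n\sum_i\phi_i\varepsilon_i^Q,\qquad \varepsilon_i^Q=y_i^Q-\T Q(X_i,A_i).
\]
Normalizing $\|\phi\|_\infty\le1$, the noise satisfies $|\varepsilon_i^Q|\lesssim M+W$. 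For every $(x,a)$ this gives
\[
|\hat w_Q^\top\phi(x,a)-\T Q(x,a)|\le\|\hat w_Q-w_Q\|_1\le\sqrt d\,\|\widehat\Sigma^{-1}\|_{\mathrm{op}}\Big\|\frac1n\sum_i\phi_i\varepsilon_i^Q\Big\|_2 .
\]

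\emph{Step 2: control the design matrix.} Apply matrix Chernoff \citep{tropp2012user} to $\widehat\Sigma$, using Assumption~\ref{ass:linear}(iii). With probability at least $1-d\,e^{-n\lambda_0/8}$ we have $\lambda_{\min}(\widehat\Sigma)\ge\lambda_0/2$, so $\|\widehat\Sigma^{-1}\|_{\mathrm{op}}\le 2/\lambda_0$. This event contributes to the $d$-factor in the prefactor. On this event it suffices to show
\[
\sup_Q\Big\|\frac1n\sum_i\phi_i\varepsilon_i^Q\Big\|_\infty\le\frac{\lambda_0\delta}{2d}.
\]
This is where the $d^2$ and $\lambda_0^2$ in the exponent come from.

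\emph{Step 3: uniform concentration (the main obstacle).} Split $\varepsilon_i^Q$ into a reward part $\widehat R_i-\E[\widehat R_i\mid X_i,A_i]$, which does not depend on $Q$, and a transition part $\gamma\{\max_a w^\top\phi(X_i',a)-\E[\max_a w^\top\phi(X',a)\mid X_i,A_i]\}$.
\begin{itemize}
\item For the reward part, coordinatewise Hoeffding for the bounded martingale differences, with a union bound over the $d$ coordinates and two signs, gives $2d\exp(-cn\delta^2\lambda_0^2/(d^2M^2))$.
\item The transition part has to hold uniformly in $w$. I would handle it as Hu et al.\ do, by noting that $w\mapsto\max_a w^\top\phi(x',a)$ is $1$-Lipschitz in $\|\cdot\|_1$. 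I would then either apply a bounded-differences inequality to the supremum of the empirical process directly, or use a symmetrization/covering argument in which the Rademacher complexity of the max-linear class is of order $\sqrt{d/n}\,W$ and is dominated by the deviation term for $\delta$ in the relevant range.
\end{itemize}
Combining the two parts with the matrix event yields the $6d$ prefactor and the exponent $\lambda_0^2n\delta^2/(C d^2(M+W)^2)$. The numerical constant $5174$ is inherited by carefully tracking the constants from \citet{hu2025fast}.

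The delicate point is this uniform step. A naive $\ell_\infty$-net over $w$ would add a $d\log(W/\delta)$ term in the exponent. To obtain the stated clean form, I would rely on the supremum being a bounded-difference function of the $n$ independent trajectories, with differences of order $d(M+W)/(n\lambda_0)$, and on its expectation being of lower order.
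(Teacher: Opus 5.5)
Your top-level route is the paper's own (its proof is a one-line appeal to Lemma~7 of \citet{hu2025fast}). The step that carries all the weight, your Step~3, is left open, however, and the mechanism you sketch does not give the stated form.

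By your reduction you need $Z:=\sup_{Q\in\Q}\|n^{-1}\sum_i\phi_i\varepsilon_i^Q\|_\infty\le\lambda_0\delta/(2d)$. The claimed right-hand side is below one only when $\delta>\sqrt{5174\log(6d)}\,d(M+W)/(\lambda_0\sqrt n)$, that is, exactly when your threshold $\lambda_0\delta/(2d)$ exceeds $\tfrac12\sqrt{5174\log(6d)}\,(M+W)/\sqrt n$. A bounded-differences inequality controls only $Z-\E Z$. So near this level you would need $\E Z\lesssim(M+W)\sqrt{\log(6d)/n}$ with an absolute constant, which is the rate for a \emph{single} $Q$. For the supremum over the $d$-parameter class $\{x'\mapsto\max_a w^\top\phi(x',a)\}$ you give no argument for this, and the generic tools do not deliver it. Symmetrization and chaining bound $\E Z$ only by order $(M+W)\sqrt{d/n}$ up to logarithms (or with a $\sqrt{|\mathcal{A}|}$ factor via vector contraction), which is above the required level once $d/\log d$ (or $|\mathcal{A}|/\log d$) exceeds an absolute constant. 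McDiarmid therefore just turns the entropy cost you correctly spotted in the net argument into a shift of $\delta$ by $2d\,\E Z/\lambda_0$. Neither the clean $6d\exp(\cdot)$ form nor the constant $5174$ can be ``inherited'' along this route.

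The missing idea is that uniformity is not needed. Algorithm~\ref{algo:FQI} fits iteration $k$ on a fresh batch $\tilde{\D}_k$. Provided the batches are split by trajectory and $\widehat{\HR}$ is fitted on data independent of $\tilde{\D}_k$ (e.g.\ by cross-fitting), $\widehat Q_{k-1}$ is independent of the regression sample. Conditionally on the earlier batches the target is then a fixed bounded function, and Step~3 collapses to coordinatewise Hoeffding for one bounded noise vector. Together with the design-matrix bound, this is what yields an $O(d)$ prefactor with no entropy term. So prove the lemma for a fixed $Q$ independent of the sample (with $n$ the batch size) and union-bound over $k=1,\dots,K$. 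That is all the error-propagation step in Theorem~\ref{thm:regret} uses.

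Several further points need repair.
\begin{enumerate}
\item The $n=NT$ pooled transitions are not independent; only the $N$ trajectories are. Bounded differences over trajectories have increments of order $T(M+W)/n$ rather than $(M+W)/n$, and lose a factor $T$ in the exponent. The same dependence affects the matrix Chernoff step. For a fixed $Q$ you can use Azuma--Hoeffding along the time-ordered filtration instead, since Assumption~\ref{assump1} makes $\phi_i\varepsilon_i^Q$ a martingale difference sequence. That device gives no uniformity in $Q$, which is another reason to work with a fixed $Q$.
\item The error from estimating $\HR$ is not ``absorbed'' by $|\hat r_t|\le M$. If the responses contain $\widehat{\HR}$, then $\E[\widehat R_i\mid X_i,A_i]$ differs from the utility in $\T$ by $\beta(\HR-\widehat{\HR})$. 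That bias is incompatible with an exponential tail valid for every $\delta>0$. As the paper does, apply the lemma to the fit built from the true utilities $r_i$, and handle the $\widehat{\HR}$ error separately through $\tilde w_Q-\hat w_Q$ in Theorem~\ref{thm:regret}.
\item With $\|w\|_\infty\le W$ and $\|\phi\|_\infty\le1$ you only get $|w^\top\phi|\le dW$. Your bound $|\varepsilon_i^Q|\lesssim M+W$ therefore needs a normalization such as $\sup|Q|\le W$.
\item The design-event failure probability $d e^{-n\lambda_0/8}$ does not depend on $\delta$. It lies below $d\exp\{-\lambda_0^2n\delta^2/(5174d^2(M+W)^2)\}$ only for $\delta\lesssim d(M+W)/\sqrt{\lambda_0}$. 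For larger $\delta$ you need a separate argument (e.g.\ that the fit constrained to $\Q$ is bounded, so the event is empty), rather than folding it into the prefactor.
\end{enumerate}
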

{\bf Proof.} This can be shown similarly as Lemma 7 in \citet{hu2025fast}.

$\hfill \Box$

\begin{lemma}\label{lem:Cher}
    Consider matrix $\Sigma$ with dimension $d$. Assume $\Sigma$ satisfies
    \begin{align*}
        \bm{X}_k \succcurlyeq \mathbf{0} \quad \text{and} \quad \lambda_{\max}(\bm{X}_k) \leq R \quad \text{almost surely.}
    \end{align*}
    Define $\mu_{\min} := \lambda_{\min}\left( \Sigma \right) \quad \text{and} \quad \mu_{\max} := \lambda_{\max}\left( \Sigma \right)$. Then
    \begin{align*}
        \mathbb{P}\left\{ \lambda_{\min}\left( \Sigma \right) \leq (1 - \delta)\mu_{\min} \right\} &\leq d \cdot \left[ \frac{\mathrm{e}^{-\delta}}{(1 - \delta)^{1 - \delta}} \right]^{\mu_{\min}/R} \quad \text{for } \delta \in [0, 1], \text{ and} \\
        \mathbb{P}\left\{ \lambda_{\max}\left( \Sigma \right) \geq (1 + \delta)\mu_{\max} \right\} &\leq d \cdot \left[ \frac{\mathrm{e}^{\delta}}{(1 + \delta)^{1 + \delta}} \right]^{\mu_{\max}/R} \quad \text{for } \delta \geq 0.
    \end{align*}
\end{lemma}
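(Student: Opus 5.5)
This statement is the matrix Chernoff inequality of \citet{tropp2012user}, specialized to a matrix $\Sigma = \sum_{k=1}^n \bm{X}_k$ that is a sum of independent, random, positive semidefinite $d\times d$ matrices with $\lambda_{\max}(\bm{X}_k)\le R$ almost surely. For example, $\bm{X}_k = \phi(X_k,A_k)\phi(X_k,A_k)^\top/n$. Here $\mu_{\min}$ and $\mu_{\max}$ should be read as the extreme eigenvalues of $\E\Sigma = \sum_k \E \bm{X}_k$. I would prove it by the Laplace-transform (matrix moment generating function) method and handle the two tails symmetrically.

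\textbf{Upper tail.} For $\theta>0$, I would write
\[
\P\{\lambda_{\max}(\Sigma)\ge t\} \le e^{-\theta t}\,\E\operatorname{tr}\exp(\theta\Sigma),
\]
using $e^{\theta\lambda_{\max}(\Sigma)}\le \operatorname{tr} e^{\theta\Sigma}$ and Markov's inequality.

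\textbf{Main obstacle.} The key step is to control $\E\operatorname{tr}\exp(\sum_k\theta \bm{X}_k)$, because the summands do not commute and the scalar product rule fails. I would invoke Lieb's concavity theorem: the map $A\mapsto \operatorname{tr}\exp(H+\log A)$ is concave on positive definite matrices. Applying it iteratively with Jensen's inequality, conditioning on all but one summand at a time, gives
\[
\E\operatorname{tr}\exp\Big(\sum_k \theta \bm{X}_k\Big) \le \operatorname{tr}\exp\Big(\sum_k \log \E e^{\theta \bm{X}_k}\Big).
\]

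\textbf{Per-summand bound.} Since $0\preccurlyeq \bm{X}_k\preccurlyeq R I$ and $x\mapsto e^{\theta x}$ is convex on $[0,R]$, I would use the chord bound $e^{\theta x}\le 1+\frac{e^{\theta R}-1}{R}x$. By the transfer rule this gives $\E e^{\theta\bm{X}_k}\preccurlyeq I + g(\theta)\E\bm{X}_k$ with $g(\theta)=(e^{\theta R}-1)/R$. Next, operator monotonicity of $\log$ and $\log(1+u)\le u$ give $\log\E e^{\theta \bm{X}_k}\preccurlyeq g(\theta)\E\bm{X}_k$. Since $\operatorname{tr}\exp$ is monotone, I then bound the trace by $d\,\exp(g(\theta)\mu_{\max})$.

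\textbf{Optimizing.} I would set $t=(1+\delta)\mu_{\max}$ and choose $\theta=R^{-1}\log(1+\delta)$. This yields exactly
\[
d\,\big[e^{\delta}/(1+\delta)^{1+\delta}\big]^{\mu_{\max}/R}.
\]

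\textbf{Lower tail.} I would apply the same argument to $-\Sigma$:
\[
\P\{\lambda_{\min}(\Sigma)\le t\}\le e^{\theta t}\,\E\operatorname{tr}\exp(-\theta\Sigma).
\]
The chord bound now reads $e^{-\theta x}\le 1-\frac{1-e^{-\theta R}}{R}x$ on $[0,R]$. Following the same chain gives a bound of $d\exp(-\tfrac{1-e^{-\theta R}}{R}\mu_{\min})$, where the relevant eigenvalue is now $\lambda_{\min}(\E\Sigma)=\mu_{\min}$ because of the sign flip. Setting $t=(1-\delta)\mu_{\min}$ and $\theta=-R^{-1}\log(1-\delta)$ gives $d\,[e^{-\delta}/(1-\delta)^{1-\delta}]^{\mu_{\min}/R}$. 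At $\delta=1$ the bound is read as the limit $d\,e^{-\mu_{\min}/R}$.

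\textbf{Consequence used in the paper.} Taking $\delta=1/2$ in the lower tail, with $R$ of order $1/n$ and $\mu_{\min}\ge\lambda_0$, produces the $d\exp(-n\lambda_0/8)$ term in Theorem~\ref{thm:regret} after normalizing the features. Note that $\log(e^{-1/2}/(1/2)^{1/2}) \approx -0.153 < -1/8$.
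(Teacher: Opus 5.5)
Your proposal is correct and follows essentially the paper's route: the paper simply defers to the matrix Chernoff inequality of \citet{tropp2012user}, and your Laplace-transform argument is exactly Tropp's proof. That argument uses Lieb's concavity to pass to $\operatorname{tr}\exp(\sum_k \log \E e^{\theta \bm{X}_k})$, the chord bound lifted by the transfer rule, operator monotonicity of $\log$, and the standard choices of $\theta$ for each tail. You were also right to read the lemma with independent summands $\Sigma=\sum_k \bm{X}_k$ and with $\mu_{\min},\mu_{\max}$ as the extreme eigenvalues of $\E\Sigma$; as literally written, with $\mu_{\min}:=\lambda_{\min}(\Sigma)$, the statement is not meaningful.
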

{\bf Proof.} This can be shown similarly as
matrix Chernoff inequality in \citet{tropp2012user}.

$\hfill \Box$

\begin{lemma}\label{lem:ChLe13}
    For any $Q:\mathcal{S}\times \mathcal{A}\rightarrow \R$, and $\hat\pi=\pi_Q$ be the policy of interest, we have
    \begin{align*}
        J(\pi^*) - J(\pi_{\widehat{Q}}) &\le \sum_{h=1}^{+\infty} \gamma^{h-1} \left( \|Q^* - Q\|_{2,\eta_h^{\hat\pi} \times \pi^*} + \|Q^* - Q\|_{2,\eta_h^{\hat\pi} \times \hat\pi} \right)
    \end{align*}
\end{lemma}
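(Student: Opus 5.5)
The plan is to follow the argument of Lemma 13 in \citet{chen2019information}: a telescoping performance-difference identity, then a pointwise comparison that uses the fact that $\hat\pi=\pi_Q$ is greedy with respect to $Q$. Throughout, $\eta_h^{\hat\pi}$ denotes the marginal distribution of the state at step $h$ (indexing from $h=1$ for the initial state $X_0$) when the system starts from the initial distribution and follows $\hat\pi$. The norm $\|f\|_{2,\eta\times\pi}$ denotes $(\E_{x\sim\eta}[f(x,\pi(x))^2])^{1/2}$. Write $V^*(x)=Q^*(x,\pi^*(x))$.

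\emph{Step 1 (telescoping).} For any state $x$, decompose
\begin{equation*}
V^*(x)-V^{\hat\pi}(x)=\bigl\{V^*(x)-Q^*(x,\hat\pi(x))\bigr\}+\bigl\{Q^*(x,\hat\pi(x))-Q^{\hat\pi}(x,\hat\pi(x))\bigr\}.
\end{equation*}
By the Bellman equations for $Q^*$ and $Q^{\hat\pi}$, the immediate utilities cancel in the second brace. It therefore equals $\gamma\,\E[V^*(X')-V^{\hat\pi}(X')\mid x,\hat\pi(x)]$, where $X'$ is the next state.

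Iterating this from $X_0$ under $\hat\pi$ and averaging over the initial distribution gives, after $H$ steps,
\begin{equation*}
J(\pi^*)-J(\hat\pi)=\sum_{h=1}^{H}\gamma^{h-1}\E_{x\sim\eta_h^{\hat\pi}}\bigl[V^*(x)-Q^*(x,\hat\pi(x))\bigr]+\gamma^{H}\,\E_{x\sim\eta_{H+1}^{\hat\pi}}\bigl[V^*(x)-V^{\hat\pi}(x)\bigr].
\end{equation*}
Utilities are bounded: $|Y_t|\le M$ by Assumption \ref{ass:linear}(iv), and $\HR\in[0,1]$. Hence value functions are bounded by $(M+\beta)/(1-\gamma)$, and the remainder vanishes as $H\to\infty$ because $\gamma<1$.

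\emph{Step 2 (pointwise bound via greediness).} For each $x$, insert $\pm Q(x,\pi^*(x))$ and $\pm Q(x,\hat\pi(x))$:
\begin{equation*}
V^*(x)-Q^*(x,\hat\pi(x))=\bigl[Q^*-Q\bigr](x,\pi^*(x))+\bigl[Q(x,\pi^*(x))-Q(x,\hat\pi(x))\bigr]+\bigl[Q-Q^*\bigr](x,\hat\pi(x)).
\end{equation*}
The middle bracket is $\le 0$ because $\hat\pi(x)\in\arg\max_a Q(x,a)$. So the whole expression is at most $|Q^*-Q|(x,\pi^*(x))+|Q^*-Q|(x,\hat\pi(x))$.

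\emph{Step 3 (to $L^2$).} Take expectations over $x\sim\eta_h^{\hat\pi}$ and apply Jensen (or Cauchy--Schwarz), $\E|f|\le(\E f^2)^{1/2}$, to each term. This gives
\begin{equation*}
\E_{\eta_h^{\hat\pi}}\bigl[V^*-Q^*(\cdot,\hat\pi)\bigr]\le\|Q^*-Q\|_{2,\eta_h^{\hat\pi}\times\pi^*}+\|Q^*-Q\|_{2,\eta_h^{\hat\pi}\times\hat\pi}.
\end{equation*}
Summing with weights $\gamma^{h-1}$ yields the claim.

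The only delicate point is Step 1: the expectations must be taken under the correct distributions, namely the $\hat\pi$-induced state distribution with the Bellman equation of $\hat\pi$. The remainder must also be justified to vanish, which needs boundedness of the utilities. The other steps are elementary.
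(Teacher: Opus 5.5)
Your proposal is correct and follows the same route as the paper, which proves this lemma only by citing Lemma 13 of Chen and Jiang (2019): you telescope $V^*-V^{\hat\pi}$ along the $\hat\pi$-induced state distributions, drop the nonpositive term $Q(x,\pi^*(x))-Q(x,\hat\pi(x))$ using greediness of $\hat\pi=\pi_Q$, and pass from $L^1$ to $L^2$ norms. You also justify explicitly that the remainder $\gamma^H\E_{\eta_{H+1}^{\hat\pi}}[V^*-V^{\hat\pi}]$ vanishes because $|R_t|\le M+\beta$, a step the paper leaves implicit.
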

{\bf Proof.} This can be shown similarly as Lemma 13 in \citet{chen2019information}.

$\hfill \Box$

Next, we prove Theorem \ref{thm:regret}.

\medskip \noindent
{\bf Proof of Theorem  \ref{thm:regret}}. 
For the ease of presentation, we use lower letters from now on. Denote $\hat{w}_{Q}$ as the OLS estimator for any $Q\in \Q$. Let $\hat{\Sigma} = \sum_{i=1}^n \phi(x_i, a_i) \phi(x_i, a_i)^\top$ be the empirical design matrix, then $\hat{w}_{Q}$ should be
\begin{align*}
    \hat{w}_{Q} &= \arg\min_{w\in \R^d} \sum_{i=1}^n (w \phi(x_i,a_i) - r_i - \gamma  \max_{a'\in\mathcal{A}} Q(x_i',a'))^2\\
    &= \hat\Sigma^{-1}
    \sum_{i=1}^n \phi(x_i, a_i) \Big(r_i - \gamma  \max_{a'\in\mathcal{A}} Q(x_i',a')\Big).
\end{align*}
And let $\tilde{w}_{Q}$ be the parameter using the estimated reward function $\hat{r}_i$ instead of $y_i$:
\begin{align*}
    \tilde{w}_{Q} = \hat\Sigma^{-1} \sum_{i=1}^n \phi(x_i, a_i) \Big(\hat{r}_i - \gamma  \max_{a'\in\mathcal{A}} Q(x_i',a')\Big).
\end{align*}
Then one-step estimation error can be summarized as
\begin{align*}
    \|\tilde{w}_{Q}^\top \phi - \T Q \|_{\infty} &= \|\tilde{w}_{Q}^\top \phi - \hat{w}_{Q}^\top \phi + \hat{w}_{Q}^\top \phi - \T Q \|_{\infty} \\
    &\leq \|\tilde{w}_{Q}^\top \phi - \hat{w}_{Q}^\top \phi\|_{\infty} + \|\hat{w}_{Q}^\top \phi - \T Q \|_{\infty}\\
    &\leq \|\tilde{w}_{Q} - \hat{w}_{Q} \|_{\infty} \cdot \|\phi\|_{\infty} + \|\hat{w}_{Q}^\top \phi - \T Q \|_{\infty}\\
    &\leq \|\tilde{w}_{Q} - \hat{w}_{Q} \|_{\infty} + \|\hat{w}_{Q}^\top \phi - \T Q \|_{\infty}.
\end{align*}
The second term can be bounded by Lemma \ref{lem:HuLe7}, and the first term captures the estimation error when we estimate the reward. We have
\begin{align*}
    \|\tilde{w}_{Q} - \hat{w}_{Q} \|_{\infty} &= \|\hat\Sigma^{-1}
    \sum_{i=1}^n \phi(x_i, a_i) (\hat{r}_i - r_i ) \|_{\infty} \\
    &\leq \sqrt{d} \|\hat\Sigma^{-1} \|_{2} \cdot \| \sum_{i=1}^n \phi(x_i, a_i) (\hat{r}_i - r_i ) \|_{\infty}.
\end{align*}
For$\|\hat\Sigma^{-1} \|_{2}$, by Lemma \ref{lem:Cher}, we have
\begin{align*}
    \P(\lambda_{\min}(\hat\Sigma)\leq n\lambda_0/2) \leq d \exp{(-n\lambda_0/8)}.
\end{align*}
And when $\lambda_{\min}(\hat\Sigma)\geq n\lambda_0/2$ holds, we have 
\begin{align*}
    \|\Sigma^{-1}\|_2 = 1/\lambda_{\min}(\hat\Sigma) \leq \frac{2}{n\lambda_0}.
\end{align*}
Together with
\begin{align*}
    \| \sum_{i=1}^n \phi(x_i, a_i) (\hat{r}_i - r_i ) \|_{\infty} &\leq \sum_{i=1}^n |\hat{r}_i - r_i| \cdot \|\phi(x_i, a_i)\|_{\infty} \leq n\epsilon_{n},
\end{align*}
Thus when $\lambda_{\min}(\hat\Sigma)\geq n\lambda_0/2$ holds, we have
\begin{align*}
    \|\tilde{w}_{Q} - \hat{w}_{Q}\|_{\infty} \leq \sqrt{d} \frac{2}{n\lambda_0} n\epsilon_{n} = \frac{2\epsilon_{n}\sqrt{d}}{\lambda_0}.
\end{align*}
Then
\begin{align}\label{eq:tildew}
    \P(\sup_{Q}\|\tilde{w}_{Q}^\top\phi - \hat{w}_{Q}^\top\phi\|_{\infty}\geq \frac{2\epsilon_{n}\sqrt{d}}{\lambda_0}) \leq \P(\lambda_{\min}(\hat\Sigma)\leq n\lambda_0/2) \leq d\exp{(-n\lambda_0/8)}.
\end{align}\label{eq:estimationerror}

From Assumption \ref{ass:linear}(iv)  and the fact that $|\HR|\leq 1$, we have $|\hat{r}| \leq (M+\beta)$. Again use Assumption \ref{lem:HuLe7}, and assume there exists a positive constant $C^*$ such that $W < \frac{C^*M}{1-\gamma}$, since the optimal $Q$-function is on the order of $\frac{M}{1-\gamma}$ and $\|\phi\|_\infty$. Assume $\kappa$ is large enough, and use $d<n$, we have
\begin{align}\label{eq:hatw}
    \P\Big(\sup_Q \|\hat{w}_{Q} - \T Q\|_{\infty} \geq \frac{CdM\kappa \log(n)}{(1-\gamma)\lambda_0 \sqrt{n}}\Big) \leq n^{-\kappa},
\end{align}
where $C$ and $\kappa$ are positive constants. Applying Bonferroni inequality to Equation \eqref{eq:tildew} and \eqref{eq:hatw} results in
\begin{align*}
    \P\Big(\sup_Q\|\tilde{w}_{Q} - \T Q\|_{\infty}\leq \frac{2\epsilon_{n}\sqrt{d}}{\lambda_0} + \frac{CdM\kappa \log(n)}{(1-\gamma)\lambda_0 \sqrt{n}} \Big) \leq 1-n^{-\kappa}-d\exp{(-n\lambda_0/8)}.
\end{align*}


Let $\delta_{\text{est}} = \frac{2\epsilon_{n}\sqrt{d}}{\lambda_0}$ and $\delta_{\text{stat}} = \frac{CdM\kappa \log(n)}{(1-\gamma)\lambda_0 \sqrt{n}}$. 
Combining Equation \eqref{eq:tildew} and \eqref{eq:hatw}, with probability at least $1-n^{-\kappa}-d\exp{(-n\lambda_0/8)}$, the one-step approximation error is bounded by:
\begin{align}\label{eq:one_step_bound}
    \sup_Q \|\widehat{Q}_k - \T \widehat{Q}_{k-1}\|_{\infty} \leq \sup_Q \|\tilde{w}_{Q} - \T Q\|_{\infty} \leq \delta_{\text{est}} + \delta_{\text{stat}}.
\end{align}
Substituting \eqref{eq:one_step_bound} into Lemma \ref{lem:HuTh8}, we bound the estimation error of the $Q$-function:
\begin{align*}
    \|Q^* - \widehat{Q}_K\|_{\infty} &\leq \sum_{t=0}^{K-1} \gamma^t (\delta_{\text{est}} + \delta_{\text{stat}}) + \frac{\gamma^K M}{1-\gamma} \\
    &\leq \frac{1}{1-\gamma}(\delta_{\text{est}} + \delta_{\text{stat}}) + \frac{\gamma^K M}{1-\gamma}.
\end{align*}
Finally, we apply Lemma \ref{lem:ChLe13}. Since our errors are bounded in $L_{\infty}$ norm, Lemma \ref{lem:ChLe13} implies the standard performance difference bound (see also \citep{singh1994upper}):
\begin{align*}
    |J(\pi^*) - J(\pi_{\widehat{Q}_K})| \leq \frac{2}{1-\gamma} \|Q^* - \widehat{Q}_K\|_{\infty}.
\end{align*}
Plugging the bound of $\|Q^* - \widehat{Q}_K\|_{\infty}$ into the above inequality yields:
\begin{align*}
    |J(\pi^*) - J(\pi_{\widehat{Q}_K})| &\leq \frac{2}{1-\gamma} \left( \frac{1}{1-\gamma}(\delta_{\text{est}} + \delta_{\text{stat}}) + \frac{\gamma^K M}{1-\gamma} \right) \\
    &= \frac{2(\delta_{\text{est}} + \delta_{\text{stat}})}{(1-\gamma)^2} + \frac{2\gamma^K M}{(1-\gamma)^2}.
\end{align*}
Substituting the expressions for $\delta_{\text{est}}$ and $\delta_{\text{stat}}$ gives the final result.

\hfill $\Box$

\subsection{Proof of Theorem \ref{thm:harm_margin_regret}} \label{app:proof_harm_bound}

\begin{lemma}\label{lem:PDL}
    For any state $x_0\in \mathcal{X}$, we have
    \begin{align*}
        V^{\pi}(x_0) - V^{\pi'}(x_0) = \frac{1}{\gamma} \E_{x\sim d_{x_0}^\pi } \Big[ \E_{a\sim \pi(\cdot|x)}(Q^{\pi'}(x,a) - V^{\pi'}(x)) \Big]
    \end{align*}
\end{lemma}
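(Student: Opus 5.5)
The plan is the standard telescoping argument behind the performance difference lemma (Kakade and Langford), adapted to the normalization implicit in the statement. Because of the factor $1/\gamma$, I will read $d^{\pi}_{x_0}$ as the \emph{unnormalized} discounted state-visitation measure
\[
d^{\pi}_{x_0}(x)=\sum_{t=0}^{\infty}\gamma^{t+1}\Pr{}^{\pi}(X_t=x\mid X_0=x_0).
\]
This is the convention under which the identity holds exactly. With the usual normalized measure $(1-\gamma)\sum_t\gamma^t\Pr^\pi(X_t=x)$, the prefactor would be $1/(1-\gamma)$, and the proof below would go through unchanged up to that constant. I will make the convention explicit at the start of the proof.

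\textbf{Step 1 (rewriting the return).} Write $V^{\pi}(x_0)=\E^{\pi}\big[\sum_{t\ge0}\gamma^t R_t\mid X_0=x_0\big]$. Inside the expectation, add and subtract the telescoping sum $\sum_{t\ge0}\big(\gamma^{t+1}V^{\pi'}(X_{t+1})-\gamma^t V^{\pi'}(X_t)\big)$. This sum equals $-V^{\pi'}(x_0)$, because $|V^{\pi'}|\le (M+\beta)/(1-\gamma)$ by Assumption~\ref{ass:linear}(iv) and $|\HR|\le 1$, so $\gamma^{t}V^{\pi'}(X_t)\to0$. This gives
\[
V^{\pi}(x_0)-V^{\pi'}(x_0)=\E^{\pi}\Big[\sum_{t\ge0}\gamma^t\big(R_t+\gamma V^{\pi'}(X_{t+1})-V^{\pi'}(X_t)\big)\,\Big|\,X_0=x_0\Big].
\]

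\textbf{Step 2 (Bellman equation inside the sum).} Condition each summand on $(X_t,A_t)$. By the Markov property, conditional mean independence and stationarity (Assumption~\ref{assump1}), we have $\E[R_t+\gamma V^{\pi'}(X_{t+1})\mid X_t,A_t]=Q^{\pi'}(X_t,A_t)$. This is just the Bellman equation for $\pi'$, using that the utility $R_t$ depends on the past only through $(X_t,A_t)$. Each summand therefore becomes $\gamma^t\,\E^{\pi}\big[Q^{\pi'}(X_t,A_t)-V^{\pi'}(X_t)\big]$ with $A_t\sim\pi(\cdot\mid X_t)$.

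\textbf{Step 3 (collecting terms).} Rewrite $\gamma^t=\gamma^{-1}\gamma^{t+1}$ and use the definition of $d^{\pi}_{x_0}$ to sum over $t$ and $x$. This yields the stated form $\frac1\gamma\E_{x\sim d^\pi_{x_0}}\E_{a\sim\pi(\cdot|x)}[Q^{\pi'}(x,a)-V^{\pi'}(x)]$.

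The only delicate step is justifying the interchange of the infinite sum with the expectation and the conditioning in Step 2. This follows from dominated convergence, since every summand is bounded by $\gamma^t\cdot 3(M+\beta)/(1-\gamma)$, which is summable because $\gamma<1$. The remaining point to handle carefully is pinning down the normalization of $d^{\pi}_{x_0}$ so that the constant $1/\gamma$ comes out exactly.
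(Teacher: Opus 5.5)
Your telescoping-plus-Bellman-equation argument is correct, and it is the same argument the paper relies on: the paper gives no derivation and simply cites the performance difference lemma of \citet{kakade2002approximately}, whose proof is exactly this computation. Your care with the normalization is well placed, because the paper never defines $d^{\pi}_{x_0}$, and its later use in the proof of Theorem~\ref{thm:harm_margin_regret}, namely $\E_{x\sim d^{\hat\pi}}[\Delta(x)\I(\hat\pi(x)\neq\pi^*(x))]=(1-\gamma)(J(\pi^*)-J(\pi_{\widehat{Q}_K}))$, matches the normalized visitation distribution with prefactor $1/(1-\gamma)$. So the $1/\gamma$ in the statement is most likely a typo, and your $\gamma^{t+1}$-weighted measure (which is not a probability measure) is exactly what is needed to make the statement literally true.
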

{\bf Proof. }This is shown by Performance Difference Lemma in \citet{kakade2002approximately}. 

$\hfill \Box$

\medskip \noindent
{\bf Proof of Theorem  \ref{thm:harm_margin_regret}}. 
Let $\mathcal{E}$ be the event where the regret bound in Theorem \ref{thm:regret} holds, and. From Theorem \ref{thm:regret}, we know $\mathbb{P}(\mathcal{E}) \geq 1 - n^{-\kappa}-d\exp{(-n\lambda_0/8)}$.

Conditioned on the event $\mathcal{E}$, we have:
    \begin{align*}
        J(\pi^*) - J(\pi_{\widehat{Q}_K}) \leq \mathcal{R}_{n}.
    \end{align*}
By Lemma \ref{lem:PDL}, the total regret can be expressed as the expected action gap over the state distribution induced by the learned policy. Thus:
    \begin{equation} \label{eq:app_regret_link}
        \mathbb{E}_{x \sim d^{\hat{\pi}}} \left[ \Delta(x) \mathbb{I}(\hat{\pi}(x) \neq \pi^*(x)) \right] = (1-\gamma)(J(\pi^*) - J(\pi_{\widehat{Q}_K})) \leq (1-\gamma)\mathcal{R}_{n}.
    \end{equation}

    Let $\mathbb{P}_{err} = \mathbb{P}_{x \sim d^{\hat{\pi}}}(\hat{\pi}(x) \neq \pi^*(x))$ denote the probability of the learned policy selecting a suboptimal action. We introduce a threshold $t > 0$ to decompose this probability into two parts: states with a small gap and states with a large gap.
    \begin{align*}
        \mathbb{P}_{err} &= \mathbb{P}_{x}( \hat{\pi}(x) \neq \pi^*(x) \cap \Delta(x) \leq t ) + \mathbb{P}_{x}( \hat{\pi}(x) \neq \pi^*(x) \cap \Delta(x) > t ) \\
        &\leq \underbrace{\mathbb{P}_{x}( 0 < \Delta(x) \leq t )}_{T_1} + \underbrace{\mathbb{P}_{x}( \hat{\pi}(x) \neq \pi^*(x) \cap \Delta(x) > t )}_{T_2}.
    \end{align*}
    
    For term $T_1$, we directly apply Assumption \ref{ass:margin} (Margin Condition):
    \[ T_1 \leq C_{\Delta} t^\alpha. \]
    
    For term $T_2$, we note that on the set where $\Delta(x) > t$, the ratio $\frac{\Delta(x)}{t} > 1$. Applying Markov's inequality principle:
    \begin{align*}
        T_2 &\leq \mathbb{E}_{x} \left[ \mathbb{I}(\hat{\pi}(x) \neq \pi^*(x)) \frac{\Delta(x)}{t} \right] \\
        &= \frac{1}{t} \mathbb{E}_{x} [ \Delta(x) \mathbb{I}(\hat{\pi}(x) \neq \pi^*(x)) ].
    \end{align*}
    Using Equation \eqref{eq:app_regret_link}, the numerator is bounded by the regret:
    \[ T_2 \leq \frac{(1-\gamma)\mathcal{R}_{n}}{t}. \]
    
    Combining the bounds, we have $\mathbb{P}_{err} \leq C_{\Delta} t^\alpha + \frac{(1-\gamma)\mathcal{R}_{n}}{t}$. To obtain the tightest bound, we select $t$ such that the two terms are of the same order:
    \[ t^\alpha = \frac{(1-\gamma)\mathcal{R}_{n}}{t} \implies t = \left( (1-\gamma)\mathcal{R}_{n} \right)^{\frac{1}{\alpha+1}}. \]
    Substituting this choice of $t$ back yields:
    \begin{equation}
        \mathbb{P}_{err} \leq (C_{\Delta} + 1) \left( (1-\gamma)\mathcal{R}_{n} \right)^{\frac{\alpha}{\alpha+1}}.
    \end{equation}
    
    Finally, we bound the difference in expected harm. Since $\HR \leq 1$, the value function for the harm component is bounded by $\frac{1}{1-\gamma}$, and the maximum difference is bounded by $\frac{2}{1-\gamma}$. Using the simulation lemma for the harm component:
    \[ 
    | J_{\HR}(\pi^*) - J_{\HR}(\pi_{\widehat{Q}_K}) | \leq \frac{2}{1-\gamma} \mathbb{P}_{err} \leq \frac{2}{1-\gamma} (C_{\Delta} + 1) \left( (1-\gamma)\mathcal{R}_{n} \right)^{\frac{\alpha}{\alpha+1}}.
    \]
    This derivation holds conditioned on $\mathcal{E}$, so the result holds with probability at least $1-n^{-\kappa}-d\exp{(-n\lambda_0/8)}$.

\hfill $\Box$

\subsection{Proof of Corollary \ref{prop:outcome_bound}}

\noindent
{\bf Proof of Corollary \ref{prop:outcome_bound}}. 
    The proof follows directly from the intermediate result established in the proof of Theorem \ref{thm:harm_margin_regret}. 
    As derived in Step 4 of the proof for Theorem \ref{thm:harm_margin_regret}, under the event where the total regret is bounded by $\mathcal{R}_{n}$, we have:
    \begin{equation*}
        \mathbb{P}_{x \sim d^{\hat{\pi}}}(\pi_{\widehat{Q}_K}(x) \neq \pi^*(x)) \leq (C_{\Delta} + 1) \left( (1-\gamma)\mathcal{R}_{n} \right)^{\frac{\alpha}{\alpha+1}}.
    \end{equation*}
    
    Now, consider the value function specifically for the outcome component $Y$. Applying the component-wise Simulation Lemma to $J_Y$:
    \begin{equation*}
        | J_{Y}(\pi^*) - J_{Y}(\pi_{\widehat{Q}_K}) | \leq \frac{1}{1-\gamma} \mathbb{E}_{x \sim d^{\hat{\pi}}} \left[ | Q_{Y}^{\pi^*}(x, \pi^*(x)) - Q_{Y}^{\pi^*}(x, \pi_{\widehat{Q}_K}(x)) | \right].
    \end{equation*}
    Since the outcome $Y$ is bounded by $M$, the maximum possible difference in Q-values is bounded by $\frac{2 M}{1-\gamma}$. The term inside the expectation is non-zero only when $\pi_{\widehat{Q}_K}(x) \neq \pi^*(x)$. Thus:
    \begin{equation*}
        | J_{Y}(\pi^*) - J_{Y}(\pi_{\widehat{Q}_K}) | \leq \frac{2 M}{1-\gamma} \mathbb{P}_{x \sim d^{\hat{\pi}}}(\pi_{\widehat{Q}_K}(x) \neq \pi^*(x)).
    \end{equation*}
    This completes the proof.

\hfill $\Box$

\end{document}